\def\eqref#1{equation~\ref{#1}}
\def\1{\bm{1}}
\DeclareMathAlphabet{\mathsfit}{\encodingdefault}{\sfdefault}{m}{sl}
\SetMathAlphabet{\mathsfit}{bold}{\encodingdefault}{\sfdefault}{bx}{n}
\newcommand{\R}{\mathbb{R}}
\def\ie{\emph{i.e., }}
\newcommand{\rnum}[1]{\uppercase\expandafter{\romannumeral #1\relax}}
\theoremstyle{plain}
\theoremstyle{definition}
\theoremstyle{remark}
\def\expandafter\normalsize\expandafter{%
    \normalsize%
    \setlength\abovedisplayskip{2pt}%
    \setlength\belowdisplayskip{4pt}%
    \setlength\abovedisplayshortskip{-4pt}%
    \setlength\belowdisplayshortskip{2pt}%
}
\icmltitlerunning{TNL with Lightning Attention}
\begin{document}

\twocolumn[
\icmltitle{Various Lengths, Constant Speed: Efficient Language Modeling with Lightning Attention}



\begin{icmlauthorlist}
\icmlauthor{Zhen Qin}{tap}
\icmlauthor{Weigao Sun}{lab}
\icmlauthor{Dong Li}{lab}
\icmlauthor{Xuyang Shen}{lab}
\icmlauthor{Weixuan Sun}{lab}
\icmlauthor{Yiran Zhong}{lab}
\end{icmlauthorlist}

\icmlaffiliation{lab}{OpenNLPLab, Shanghai AI Lab}
\icmlaffiliation{tap}{TapTap}

\icmlcorrespondingauthor{Yiran Zhong}{zhongyiran@gmail.com}

\icmlkeywords{Linear attention,Lightning attention,unlimited sequence length, large language model}
\vskip 0.3in
]


\printAffiliationsAndNotice{}  

\begin{abstract}
We present Lightning Attention, the first linear attention implementation that maintains a constant training speed for various sequence lengths under fixed memory consumption. Due to the issue with cumulative summation operations (\texttt{cumsum}), previous linear attention implementations cannot achieve their theoretical advantage in a casual setting. However, this issue can be effectively solved by utilizing different attention calculation strategies to compute the different parts of attention. Specifically, we split the attention calculation into intra-blocks and inter-blocks and use conventional attention computation for intra-blocks and linear attention kernel tricks for inter-blocks. This eliminates the need for \texttt{cumsum} in the linear attention calculation. Furthermore, a tiling technique is adopted through both forward and backward procedures to take full advantage of the GPU hardware. To enhance accuracy while preserving efficacy, we introduce TransNormerLLM (TNL), a new architecture that is tailored to our lightning attention. We conduct rigorous testing on standard and self-collected datasets with varying model sizes and sequence lengths. TNL is notably more efficient than other language models. In addition, benchmark results indicate that TNL performs on par with state-of-the-art LLMs utilizing conventional transformer structures. The source code is released at github.com/OpenNLPLab/TransnormerLLM.
\end{abstract}

\section{Introduction}
\label{sec: intro}

\begin{figure*}[t]
    \centering
\includegraphics[width=1\textwidth]{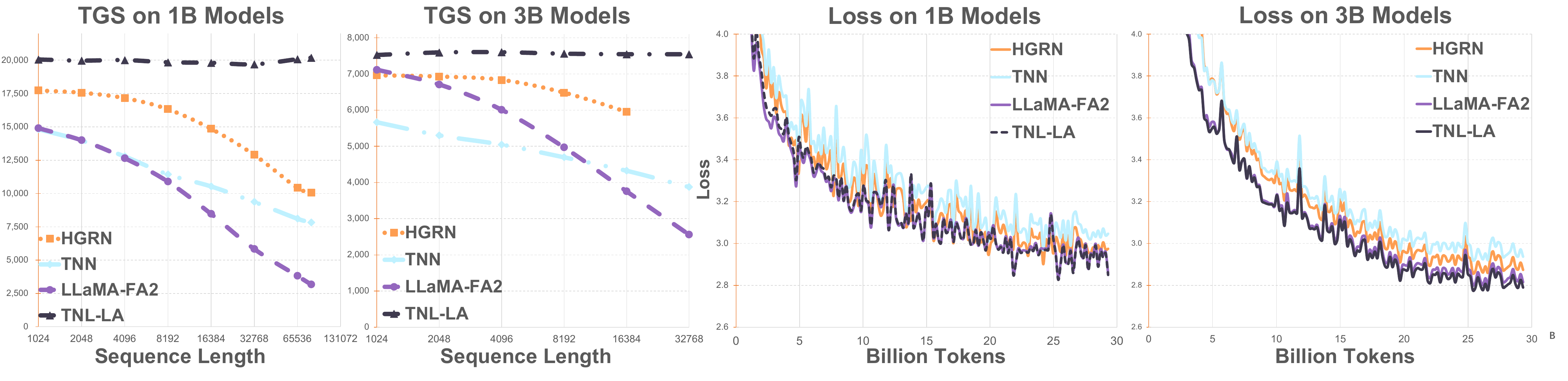}
    \vspace{-6mm}
    \caption{\textbf{Training speed and accuracy comparison.} We compare TNL's training speed and losses with state-of-the-art transformer models (LLaMA with FlashAttention-2) and efficient non-transformer models (HGRN~\cite{qin2023hierarchically} and TNN~\cite{qin2023toeplitz}). TNL achieves the lowest training losses and maintains consistent training speed regardless of sequence length.}
    \vspace{-2mm}
    \label{fig: tgs}
\end{figure*}

Linear attention has emerged as a potentially viable alternative to conventional softmax attention over the last five years~\cite{bahdanau2016neural,debrébisson2016cheap}. However, despite its promise, none of the current leading large language models~\cite{touvron2023llama,2307.09288,zeng2022glm,gpt-neo,falcon40b,mpt-7b,wang2021gpt,baichuan2023baichuan2,jiang2023mistral} have adopted linear attention mechanisms.
There are two possible reasons for that:
1).~\emph{Inferior performance:} There is a notable performance gap between existing linear attention-based models~\cite{katharopoulos2020transformers,zhen2022cosformer} and state-of-the-art softmax attention-based models~\cite{touvron2023llama,2307.09288} in language modeling.
2).~\emph{Slow training speed:} Existing linear attention models frequently struggle with slow training speeds due to the use of cumulative summation operations (\texttt{cumsum})~\cite{hua2022transformer}. As a result, these models~\cite{hua2022transformer} often adopt conventional attention computation during practical use, losing the theoretical advantages of linear attention.

In this paper, we address the aforementioned issues of linear attention and propose a new linear attention-based model that outperforms softmax attention-based models in terms of accuracy and efficiency in language modeling.

\textbf{\emph{Training speed.}} We introduce Lightning Attention, the first linear attention implementation that enables linear attention to realize its theoretical computational benefits. To achieve the linear computational complexities, the core idea is to leverage the "kernel trick" to accelerate the attention matrix computation, \ie~compute the product of keys and values first to circumvent the $n \times n$ query-key matrix multiplication. The slow operation \texttt{cumsum} is needed during the calculation in causal language modeling.
To solve this dilemma, we apply the concept of "divide and conquer" to perform the calculation. Specifically, our attention calculation is divided into intra-blocks and inter-blocks. The conventional attention calculation is applied to intra-blocks, while the "kernel trick" is utilized for inter-blocks. We also leverage tiling techniques in both forward and backward processes to maximize GPU hardware performance and tailor the technique used in FlashAttention~\cite{dao2022flashattention,dao2023flashattention2} to our Lightning Attention to make it IO-friendly. As a result, Lightning Attention maintains a constant training speed with increasing sequence length under fixed memory consumption, as shown in Fig.~\ref{fig: tgs}.

\textbf{\emph{Accuracy.}} As the adage goes, a good horse often needs a good spur. We propose a novel architecture, TransNormerLLM (TNL), which is specifically designed for Lightning Attention in order to enhance its performance. TNL evolves from the previous linear attention architecture TransNormer~\citep{qin-etal-2022-devil} by making advanced modifications that include positional embedding, linear attention acceleration, gating mechanism, tensor normalization. Specifically, we use LRPE~\citep{qin2023linearized} together with an exponential decay to avoid attention dilution issues while allowing the model to retain global interactions between tokens. A gating mechanism is utilized to smooth training, and a new tensor normalization scheme is proposed to accelerate the model while preserving its accuracy. We also implement an efficient model parallel schema for TransNormerLLM, enabling seamless deployment on large-scale clusters and facilitating expansion to even more extensive models. As shown in Fig.~\ref{fig: tgs}, TNL achieves the lowest training loss among the existing efficient transformer structures~\cite{qin2023toeplitz,qin2023hierarchically} as well as SOTA transformer models~\cite{2307.09288}.

We perform a comprehensive evaluation of Lightning Attention across a diverse range of sequence lengths to assess its accuracy and compare its computational speed and memory utilization with FlashAttention-2 ~\cite{dao2023flashattention2}. Lightning Attention exhibits a notable advantage in computational speed and memory consumption compared to its counterparts without compromising performance. We also validate our model design through a series of ablations and train models with sizes of 44M, 385M, 1B, 7B, and 15B on standard or our self-collected datasets. Benchmark results demonstrate that TNL not only matches the performance of SOTA LLMs with Transformer but is also significantly faster.

\section{Related Work}
\label{sec: preliminary}
\subsection{Efficient Language Modeling}
New efficient model architectures are being explored to address the high time complexity of the traditional transformer structure. Four promising alternatives, including linear transformers, state space models, long convolution, and linear recurrence, are being developed to replace self-attention modules for long sequence modeling.

\textbf{Linear Attention}
Linear attention decomposes Softmax Attention into the inner product of hidden representations, allowing it to use the "Kernel Trick", where the product of keys and values is computed first to avoid the quadratic $n \times n$ matrix. Different methods utilize various hidden representations. For example, ~\citet{katharopoulos2020transformers} use 1+elu as an activation function, ~\citet{zhen2022cosformer} use the cosine function to approximate the properties of softmax, and ~\citet{choromanski2021rethinking,zheng2022linear,zheng2023efficient} approximate softmax through theoretical approaches. Although its theoretical complexity is $O(nd^2)$, the actual computational efficiency of linear attention becomes low when used in causal attention due to the need for \textit{cumsum} operations~\citep{hua2022transformer}. Moreover, most linear attention still exhibits a certain performance gap compared to traditional Transformers~\citep{katharopoulos2020transformers,liu2022neural}.

\textbf{State Space Model}
State Space Model is based on the State Space Equation for sequence modeling~\citep{s4}, using special initialization~\citep{2008.07669,gu2022parameterization}, diagonalization assumptions~\citep{gupta2022diagonal}, and mixed techniques~\citep{h3} to achieve performance comparable to Transformers. Due to the characteristics of the state space equation, inference can be conducted with constant complexity~\citep{s4}, whereas the training speed can be slow compared with FlashAttention.

\textbf{Long Convolution}
Long convolution models~\citep{qin2023toeplitz,simplelongconv} utilize a kernel size equal to the input sequence length, facilitating a wider context compared to traditional convolutions. Training these models involves Fast Fourier Transforms (FFT) algorithm, reducing the computational complexities to $O(n\log n)$. However, long convolution models need to cache all historical computations for causal convolution inference, making them less ideal for processing long sequences compared to RNNs.

\textbf{Linear RNN}
Linear RNNs~\citep{2303.06349,qin2023hierarchically}, in contrast, stand out as more suitable replacements for transformers in long-sequence modeling. A notable example is the HGRN~\citep{qin2023hierarchically} model, a linear RNN-based LLM that has shown competitive performance against similarly scaled GPT models.

\subsection{IO-aware Attention}
The FlashAttention series~\citep{dao2022flashattention,dao2023flashattention2} focuses on system-level optimizations for the efficient implementation of the standard attention operator on GPU platforms. These approaches employ tiling strategies to minimize the volume of memory reads/writes between the GPU's high bandwidth memory (HBM) and on-chip SRAM. Although these methods optimize the IO communication in attention calculation and are faster than previous softmax attention implementations, their theoretical computation complexity remains $O(n^2d)$, making them unsuitable for long sequence language modeling.

\setlength{\textfloatsep}{2pt}
\begin{algorithm}[t]
\small
    \caption{Linear Attention Left Product}
    \label{algo:left product}
    \begin{algorithmic}
    \STATE{\textbf{Input:} $\mathbf Q,\mathbf K,\mathbf V \in \mathbb{R}^{n \times d}$.}
     \STATE{Initialize mask $\mathbf M\in \mathbb R^{n\times n}$, where $\mathbf M_{ts} = 1$, if $t\ge s$, else 0.}
        \STATE{Load $\mathbf Q,\mathbf  K, \mathbf M $ from HBM, compute $\mathbf {S}=(\mathbf Q\mathbf K^{\top})\odot \mathbf M$, write $\mathbf {S}$ to HBM.}
  \STATE{Load $\mathbf S, \mathbf V$ from HBM, compute $\mathbf {O}=\mathbf S \mathbf V$, write $\mathbf {O}$ to HBM.}
    \STATE{Return $\mathbf O$.}
\end{algorithmic}
\end{algorithm}

\section{Lightning Attention}
\label{sec: algo}
\subsection{Preliminary}
We first recall the formulation of linear attention and then introduce our proposed Lightning Attention.
In the case of NormAttention within TransNormer~\citep{qin-etal-2022-devil}, attention computation deviates from the conventional Transformer structure~\cite{vaswani2017attention} by eschewing the costly softmax and scaling operations. The NormAttention mechanism can be expressed as follows:
\begin{equation}
\mathbf{O}=\mathrm{Norm}((\mathbf{Q} \mathbf{K}^{\top})\mathbf{V}),
\label{eq: norm attention}
\end{equation}
where $\mathbf{Q}$, $\mathbf{K}$, and $\mathbf{V} \in \R^{n\times d}$ are the query, key, and value matrices, respectively, with $n$ for sequence length and $d$ for feature dimension. The equation can be transformed into its linear variant using right matrix multiplication:
\begin{equation}
\small
\mathbf{O}=\mathrm{Norm}(\mathbf{Q} (\mathbf{K}^{\top}\mathbf{V})),
\label{eq: norm attention 2}
\end{equation}
The linear formulation enables efficient recurrent prediction with $O(nd^2)$ complexity during training. Additionally, linear attention guarantees a constant computation complexity of $O(d^2)$ regardless of the sequence length. This is achieved by recurrently updating $\mathbf{K}^{\top}\mathbf{V}$, eliminating the need for repeated computation of the entire attention matrix. In contrast, standard softmax attention has a complexity of $O(nd^2)$ during inference.

Nevertheless, when dealing with causal prediction tasks, the effectiveness of the right product is compromised, leading to the requirement for the computation of \texttt{cumsum}~\citep{hua2022transformer}. This impediment hinders the potential for highly efficient parallel computation. In this section, we show that the requirement of \texttt{cumsum} can be eliminated by leveraging the concept of "divide and conquer" in linear attention calculation. For the convenience of discussion, Norm will be ignored in the subsequent discussion.

There are two computational approaches to handling the causal scenario.
One is using conventional attention computation (the Left Product), which involves computing $\mathbf Q\mathbf K^\top$ first.
The complete calculation formula is as follows:
\begin{equation}
\label{eq:rnn}
\mathbf O=[(\mathbf Q\mathbf K^\top)\odot \mathbf M] \mathbf V
\end{equation}
where $\mathbf M_{ts}=1$ if $t\ge s$, otherwise 0. The complete algorithm is detailed in Algorithm~\ref{algo:left product}. Note that this algorithm is parallelizable, but its time complexity is $O(n^2d)$.
The other option is to compute the $\mathbf k_t\mathbf v_t^{\top}$ first (the Right Product), which leverages a recursive formula for computation:
\begin{equation}
\label{eq:fwd}
\mathbf {kv}_0=\mathbf 0, \mathbf {kv}_t=\mathbf {kv}_{t-1} + \mathbf k_t\mathbf v_t^\top, \mathbf o_t^{\top} = \mathbf q_t^{\top} \mathbf {kv_t}.
\end{equation}
The complete algorithm is detailed in Algorithm~\ref{algo:right product}. This algorithm has a time complexity of $O(nd^2)$, but it is not GPU-friendly, making it slower than the first approach.

\begin{algorithm}[t]
\small
    \caption{Linear Attention Right Product}
    \label{algo:right product}
    \begin{algorithmic}
    \STATE{\textbf{Input:} $\mathbf Q,\mathbf K,\mathbf V \in \mathbb{R}^{n \times d}$.}
     \STATE{Initialize $\mathbf {kv} =0\in \mathbb R^{d\times d} $.}
    \FOR{$t=  1,\ldots ,n$}
        \STATE{Load $\mathbf q_t,\mathbf  k_t, \mathbf v_t \in \mathbb{R}^{d \times 1}$ from HBM to on-chip SRAM.}
\STATE{On chip, compute $\mathbf{kv} = \mathbf{kv}+ \mathbf k_t  \mathbf v_t^{\top}$.}
        
        \STATE{On chip, compute $\mathbf{o}_{t} = \mathbf q_t^{\top}\mathbf {kv} $.}
      \STATE{Write $\mathbf o_t^{\top}$ to HBM as the $t$-th row of $\mathbf O$.}
      \ENDFOR
      \STATE{Return $\mathbf O$.}
\end{algorithmic}
\end{algorithm}

\subsection{Linear Attention with Tiling}
We use a tiling technique to compute linear attention in a causal setting. Specifically, we first divide $\mathbf Q, \mathbf K, \mathbf V$ into two blocks by rows:
\begin{equation*}
\begin{gathered}
\textstyle
\mathbf X=\left[\begin{matrix}
\mathbf X_1\\
\mathbf X_2
\end{matrix}\right], \mathbf X_1 \in \mathbb R^{m\times d}, \mathbf X_2 \in \mathbb R^{(n - m)\times d}, \\
\mathbf X\in \{\mathbf Q, \mathbf K, \mathbf V\}.
\end{gathered}
\end{equation*}
\normalsize
Then, by unfolding Eq.~\ref{eq:rnn}, we get (note that $\mathbf {kv}_0=0)$:
\begin{equation}
\begin{aligned}
\textstyle
\mathbf {kv}_{s}&=\mathbf {kv}_{0}+\sum_{j=1}^{s} \mathbf k_j\mathbf v_j^{\top},s=1,\ldots, m.\\ 
\mathbf o_{s}^{\top} &=\mathbf q_{s}^{\top} \mathbf {kv}_{s}=\mathbf q_{s}^{\top}\mathbf {kv}_{0} + \mathbf q_{s}^{\top}\sum_{j=1}^{s} \mathbf k_j\mathbf v_j^{\top}. \\
\end{aligned}
\end{equation}
In block form, we have:
\begin{equation}
\begin{aligned}
\mathbf O_1 
& = \mathbf Q_1 \mathbf {kv}_0 + [(\mathbf Q_1 \mathbf K_1^{\top})\odot \mathbf M]\mathbf V_1 \\ 
&\triangleq
\mathbf Q_1 \mathbf {KV}_0 + [(\mathbf Q_1 \mathbf K_1^{\top})\odot \mathbf M]\mathbf V_1.
\end{aligned}
\end{equation}
The above formula shows that the forward causal linear attention can be divided into two parts:
\begin{compactitem}
\item The computation within the block $ [(\mathbf Q_1 \mathbf K_1^{\top})\odot \mathbf M]\mathbf V_1$ (intra blocks) can use the Left Product;
\item The computation between blocks $\mathbf Q_1 \mathbf {KV}_0$ (inter blocks) can use the Right Product.
\end{compactitem}

It is worth noting that the second block can be computed using the same idea as follows:
\begin{equation}
\begin{aligned}
 \mathbf {kv}_{m+t}&=\mathbf  {kv}_{m}+\sum_{j=m+1}^{m+t} \mathbf k_j\mathbf v_j^{\top},t=1,\ldots,n-m, \\
 \mathbf o_{m+t}^{\top} &=\mathbf q_{m+t}^{\top} \mathbf {kv}_{m+t},\\
 \mathbf O_2 
& = \mathbf Q_2 \mathbf {kv}_m + [(\mathbf Q_2 \mathbf K_2^{\top})\odot \mathbf M]\mathbf V_2 \\
&\triangleq
\mathbf Q_2 \mathbf {KV}_1 + [(\mathbf Q_2 \mathbf K_2^{\top})\odot \mathbf M]\mathbf V_2.
\end{aligned}
\end{equation}
Note that to compute the second block, we have to use $\mathbf{KV}_1=\mathbf {kv}_m$, which can be computed by:
\begin{equation}
\mathbf{KV}_1 = \mathbf {KV}_0+\sum_{j=1}^{m} \mathbf k_m\mathbf v_m^{\top}= \mathbf {KV}_0+\mathbf K_1^{\top}\mathbf V_1.
\end{equation}
where $\mathbf {KV}_0=\mathbf {kv}_0$. By using the above strategy to divide the matrix into multiple blocks, we obtain the Lightning Attention Forward Pass. More detailed derivation can be found in the Appendix~\ref{proof}.

\begin{algorithm}[t]
\small
    \caption{Lightning Attention Forward Pass}
    \label{algo:Lightning Attention fw pseudo}
    \begin{algorithmic}
    \STATE{\textbf{Input:} $\mathbf Q,\mathbf K,\mathbf V \in \mathbb{R}^{n \times d}$, block sizes $B$.}
    \STATE{Divide $\mathbf {X}$ into $T = \frac{n}{B}$ blocks $\mathbf X_1, \mathbf X_2, ...\mathbf X_{T}$ of size $B \times d$ each, where $\mathbf X\in \{\mathbf Q, \mathbf K, \mathbf V,\mathbf O \}$. }
     \STATE{Initialize mask $\mathbf M\in \mathbb R^{B\times B}$, where $\mathbf M_{ts} = 1$, if $t\ge s$, else 0.} 
     \STATE{Initialize $\mathbf {KV} =0\in \mathbb R^{d\times d} $.}
    \FOR{$t=  1,\ldots ,T$}
        \STATE{Load $\mathbf Q_t,\mathbf  K_t, \mathbf V_t \in \mathbb{R}^{B \times d}$ from HBM to on-chip SRAM.}
        \STATE{On chip, compute $\mathbf O_{\mathrm{intra}}= [(\mathbf Q_t \mathbf K_t^{\top }) \odot \mathbf M]\mathbf V_t$.}
        \STATE{On chip, compute $\mathbf{O}_{\mathrm{inter}} = \mathbf Q_t (\mathbf {KV}) $.}
        \STATE{On chip, compute $\mathbf{KV} = \mathbf{KV}+ \mathbf K_t^{\top}  \mathbf V_t$.}
      \STATE{Write $\mathbf O_t=\mathbf O_{\mathrm{intra}}+ \mathbf{O}_{\mathrm{inter}}$ to HBM as the $t$-th block of $\mathbf O$.}
      \ENDFOR
      \STATE{Return $\mathbf O$.}
\end{algorithmic}
\end{algorithm}

\begin{figure}[t]
  \centering
  \includegraphics[width=1\columnwidth]{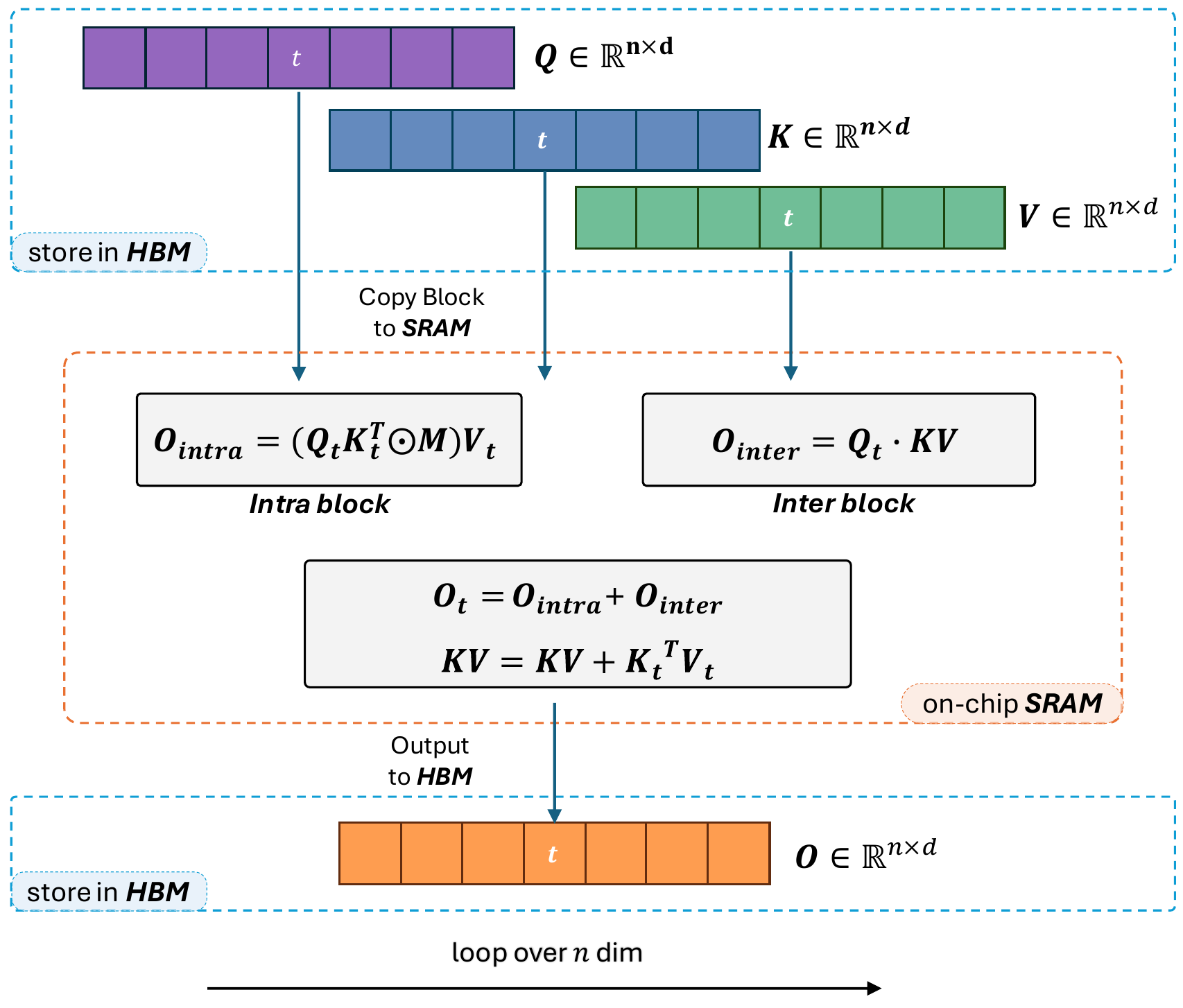}
  \vspace{-4mm}
  \caption{\textbf{Structural framework of Lightning Attention} is detailed in its algorithmic schematic. During the $t$-th iteration, the tiling blocks of matrices $\mathbf{Q}_t, \mathbf{K}_t, \mathbf{V}_t$ are transferred from High Bandwidth Memory (HBM) to Static Random-Access Memory (SRAM). Within the SRAM, the outputs $\mathbf{O}_{\mathrm{intra}}$ and $\mathbf{O}_{\mathrm{inter}}$ are computed independently, followed by an update to the $\mathbf{KV}$ matrix. Subsequently, the final output $\mathbf{O}_t$, which is the sum of $\mathbf{O}_{\mathrm{intra}}$ and $\mathbf{O}_{\mathrm{inter}}$, is written back from SRAM to HBM.}
  \vspace{5mm}
  \label{fig:lightning2}
\end{figure}

\begin{algorithm}[t]
\small
    \caption{Lightning Attention Backward Pass}
    \label{algo:Lightning Attention bw pseudo}
    \begin{algorithmic}
    \STATE{\textbf{Input:} $\mathbf Q,\mathbf K,\mathbf V,\mathbf{dO} \in \mathbb{R}^{n \times d}$, block sizes $B$.}
    \STATE{Divide $\mathbf {X}$ into $T = \frac{n}{B}$ blocks $\mathbf X_1, \mathbf X_2, ...\mathbf X_{T}$ of size $B \times d$ each, where $\mathbf X\in \{\mathbf Q, \mathbf K, \mathbf V \}$. }
     \STATE{Divide $\mathbf {dX}$ into $T = \frac{n}{B}$ blocks $\mathbf {dX}_1, \mathbf {dX}_2, ...\mathbf {dX}_{T}$ of size $B \times d$ each, where $\mathbf X\in \{\mathbf Q, \mathbf K, \mathbf V, \mathbf O  \}$ }.
     \STATE{Initialize mask $\mathbf M\in \mathbb R^{B\times B}$, where $\mathbf M_{ts} = 1$, if $t\ge s$, else 0.}
      \STATE{Initialize $\mathbf {KV} =0, \mathbf{dKV}=0\in \mathbb R^{d\times d} $.}
  \FOR{$t=  1,\ldots ,T$}
        \STATE{Load $\mathbf K_t, \mathbf V_t, \mathbf O_t, \mathbf {dO}_t \in \mathbb{R}^{B \times d}$ from HBM to on-chip SRAM.}  
        \STATE{On chip, compute $\mathbf {dQ}_{\mathrm{intra}} =[(\mathbf {dO}_t \mathbf V_t^{\top}) \odot \mathbf M] \mathbf{K}_t$.}
    \STATE{On chip, compute $\mathbf {dQ}_{\mathrm{inter}} =\mathbf{dO}_t \mathbf{KV}^{\top} $.}
     \STATE{On chip, compute $\mathbf{KV} = \mathbf{KV}+  \mathbf K_t^{\top}  \mathbf V_t$.}             
     \STATE{Write $\mathbf {dQ}_t=\mathbf{dQ}_{\mathrm{intra}} + \mathbf{dQ}_{\mathrm{inter}}$ to HBM as the $t$-th block of $\mathbf {dQ}$.}
      \ENDFOR     
    \FOR{$t=  T,\ldots ,1$}
        \STATE{Load $\mathbf Q_t, \mathbf K_t, \mathbf V_t, \mathbf O_t, \mathbf {dO}_t \in \mathbb{R}^{B \times d}$ from HBM to on-chip SRAM.}       
    \STATE{On chip, compute $\mathbf{dK_{\mathrm{intra}}}=[(\mathbf {dO}_t \mathbf V_t^{\top}) \odot \mathbf M ]^{\top} \mathbf{Q}_t$.}
     \STATE{On chip, compute $\mathbf{dK_{\mathrm{inter}}}={ \mathbf V_t}\mathbf{dKV}^{\top}$.}   
  \STATE{On chip, compute $\mathbf{dV_{\mathrm{intra}}}=[(\mathbf Q_t \mathbf K_t^{\top}) \odot \mathbf M ]^{\top} \mathbf{dO}_t$.}
     \STATE{On chip, compute $\mathbf{dV_{\mathrm{inter}}}=\mathbf K_t \mathbf{dKV}$.}    
   \STATE{On chip, compute $\mathbf{dKV} = \mathbf{dKV}+ \mathbf Q_t^{\top}  \mathbf {dO}_t $.}         
     \STATE{Write $ \mathbf {dK}_t=\mathbf {dK}_{\mathrm{intra}} +\mathbf {dK}_{\mathrm{inter}} ,\mathbf {dV}_t=\mathbf {dV}_{\mathrm{intra}} +\mathbf {dV}_{\mathrm{inter}}$ to HBM as the $t$-th block of $ \mathbf {dK}, \mathbf {dV}$.}
      \ENDFOR
      \STATE{Return $\mathbf {dQ, dK, dV}$.}
\end{algorithmic}
\end{algorithm}

For the backward propagation, according to~\citep{katharopoulos2020transformers}, we can rewrite the process as:
\begin{equation*}
\begin{aligned}
&\mathbf{d} \mathbf{q}_t^{\top} =\mathbf{d} \mathbf{o}_t^{\top} \mathbf{k} \mathbf{v}_t^{\top},
\mathbf{d k}_t^{\top} =\mathbf{v}_t^{\top}\mathbf{d} \mathbf{k} \mathbf{v}_t^{\top},
\mathbf{d v}_t^{\top}  =\mathbf{k}_t^{\top}\mathbf{d} \mathbf{k} \mathbf{v}_t, \\
&\mathbf{d k v}_{n+1}=0 \in \mathbb{R}^{d \times d}, 
\mathbf{d} \mathbf{k} \mathbf{v}_{t-1}= \mathbf{d} \mathbf{k} \mathbf{v}_t+\mathbf{q}_{t-1} \mathbf{d} \mathbf{o}_{t-1}^{\top}.
\end{aligned}
\end{equation*}
Therefore, the calculation of the backward propagation is consistent with the forward Eq.~\ref{eq:fwd}, and the Lightning Attention Backward Pass can also be obtained using the tiling technique. A detailed proof can be found in the Appendix~\ref{proof}.

\subsection{Complexity analysis}
\newtheorem{complexity}{Theorem}[section]
\begin{complexity}
\label{th:complexity}
The time complexity of Lightning Attention is $O(nd^2+ nBd)$\footnote{we choose $B \approx d$ in practice, the time complexity is $O(nd^2)$.}.
\end{complexity}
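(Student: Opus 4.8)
The plan is to reduce the complexity bound to a direct operation count over the tiled forward and backward algorithms. Since both passes process the sequence in $T = n/B$ blocks of size $B \times d$, and each block performs only a fixed number of matrix multiplications whose sizes depend on $B$ and $d$ (never on the full length $n$), the total cost is obtained by pricing one block and multiplying by $T$. The key is to verify that no step secretly scales like $n^2$: every "conventional attention" operation is confined to a single block, so its quadratic factor is $B^2$, not $n^2$.

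First I would tally the per-block cost of the forward pass (Algorithm~\ref{algo:Lightning Attention fw pseudo}). The intra-block term $[(\mathbf Q_t \mathbf K_t^{\top})\odot \mathbf M]\mathbf V_t$ requires a $B\times B$ score matrix $\mathbf Q_t\mathbf K_t^\top$ at cost $O(B^2 d)$, a Hadamard mask at cost $O(B^2)$, and a final product with $\mathbf V_t$ at cost $O(B^2 d)$, for a block total of $O(B^2 d)$. The inter-block term $\mathbf Q_t(\mathbf{KV})$ multiplies a $B\times d$ by a $d\times d$ matrix at cost $O(Bd^2)$, and the state update $\mathbf{KV}+\mathbf K_t^\top\mathbf V_t$ costs $O(Bd^2)$ as well. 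Hence one block costs $O(B^2 d + Bd^2)$, and summing over $T=n/B$ blocks gives
\begin{equation*}
\frac{n}{B}\cdot O(B^2 d + Bd^2) = O(nBd + nd^2).
\end{equation*}

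Next I would repeat the identical accounting for the backward pass (Algorithm~\ref{algo:Lightning Attention bw pseudo}). It consists of two loops over the $T$ blocks, and each on-chip operation has exactly the same shape profile as in the forward pass: the intra terms such as $[(\mathbf{dO}_t\mathbf V_t^\top)\odot\mathbf M]\mathbf K_t$ cost $O(B^2 d)$ per block, while the inter terms such as $\mathbf{dO}_t\mathbf{KV}^\top$ and the state updates $\mathbf{KV}+\mathbf K_t^\top\mathbf V_t$, $\mathbf{dKV}+\mathbf Q_t^\top\mathbf{dO}_t$ cost $O(Bd^2)$ per block. The two loops only double the constant, so the backward pass is again $O(nBd + nd^2)$. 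Combining forward and backward yields the claimed $O(nd^2 + nBd)$.

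I do not expect a genuine obstacle here, as the argument is an operation count; the only point requiring care is the bookkeeping that keeps the masked ``conventional'' computation inside a block, so that the element-wise masking and the $\mathbf Q_t\mathbf K_t^\top$ product contribute $O(B^2)$ and $O(B^2 d)$ rather than $O(n^2)$ and $O(n^2 d)$. Once this localization is made explicit, the summation over $n/B$ blocks collapses the $B^2 d$ term to $nBd$ and the $Bd^2$ term to $nd^2$, giving the stated bound; the footnote's $B\approx d$ specialization then immediately reduces it to $O(nd^2)$.
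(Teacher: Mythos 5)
Your proposal is correct and follows essentially the same argument as the paper: price each block's intra term at $O(B^2d)$ and its inter term and state update at $O(Bd^2)$, multiply by $T=n/B$, and note that the backward pass has the same shape profile. Your accounting is slightly more explicit about the mask cost and the two backward loops, but there is no substantive difference.
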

\begin{proof}[Proof of Theorem~\ref{th:complexity}]
\label{proof:complexity}
For the forward pass, according to Algorithm~\ref{algo:Lightning Attention fw pseudo}, each intra part's time complexity is $O(B^2d)$, each inter part's time complexity is $O(B d^2)$, the time complexity of updating $\mathbf {KV}$ is $O(Bd^2)$, so each the time complexity in each loop is $O(B^2 d + Bd^2)$, since we loop for $T=n/B$ times, the total time complexity is $O((B^2d+Bd^2)n/B)=O(nd^2+nBd)$.
Because the computation of the backward pass is similar to that of the forward pass, the time complexity of the backward pass is also $O(nd^2+nBd)$.
\end{proof}

\subsection{Exact IO-aware Implementation}

Lightning Attention employs the above tiling methodology throughout its whole computation process and leverages distinct approaches to optimize the utilization of memory bandwidth between HBM and SRAM within a GPU. Specifically, in each iteration $t$, matrices $\mathbf{Q}_t, \mathbf{K}_t, \mathbf{V}_t$ undergo segmentation into blocks, subsequently transferred to SRAM for computation. The intra- and inter-block operations are segregated, with intra-blocks employing the left product and inter-blocks utilizing the right product. This approach optimally exploits the computational and memory efficiencies associated with the right product, enhancing overall execution speed. The intermediate activation $\mathbf{KV}$ is iteratively saved and accumulated within SRAM. Subsequently, the outputs of intra-blocks and inter-blocks are summed within SRAM, and the results are written back to HBM. The structure of Lightning Attention is illustrated in Fig.~\ref{fig:lightning2}.
The intricate details of the Lightning Attention implementation are explained through Algorithm~\ref{algo:Lightning Attention fw pseudo} for the forward pass and Algorithm~\ref{algo:Lightning Attention bw pseudo} for the backward pass.

\section{TransNormerLLM}
\subsection{The Overall Structure}
\begin{figure}[t]
  \centering
    \includegraphics[width=0.9\columnwidth]{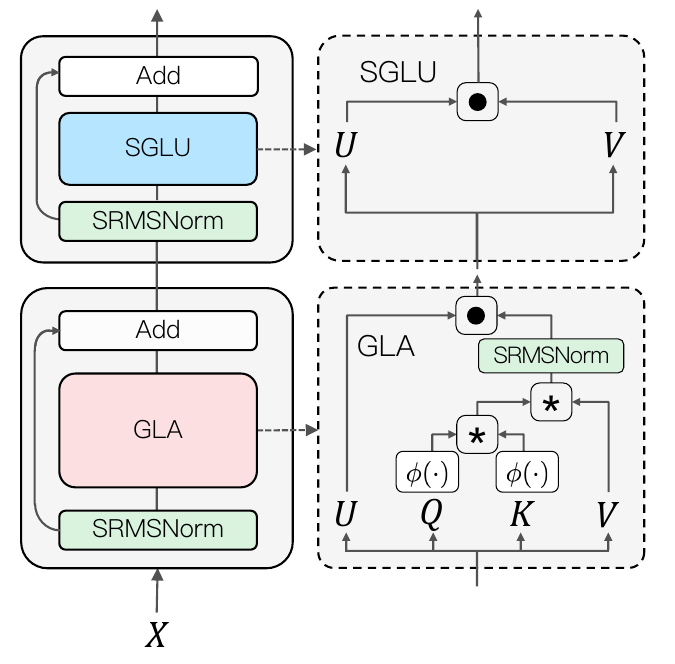}
    \vspace{-3mm}
  \captionof{figure}{Architecture overview of TransNormerLLM (TNL). 
  Each transformer block is composed of a Gated Linear Attention (GLA) for token mixing and a Simple Gated Linear Unit (SGLU) for channel mixing.
  We apply Pre-norm for both modules.}  
  \label{fig:arch}  
  \vspace{4mm}
\end{figure}
Our structure is based on the findings of TransNormer~\cite{qin-etal-2022-devil} but has custom modifications to balance efficiency and performance. We illustrate the overall structure in Fig.~\ref{fig:arch}.
The input $\mathbf X$ is updated through two consecutive steps: 1). It undergoes Gated Linear Attention (GLA) with the application of SimpleRMSNorm (SRMSNorm) normalization. 2). It goes through the Simple Gated Linear Unit (SGLU) with SRMSNorm normalization. We apply the Pre-norm for both modules.

\subsection{Custom Modification}
\normalsize
In this section, we outline the key designs and inspiration behind each custom modification, including positional encoding, gating mechanisms, and tensor normalization.

\textbf{Position Encoding}     
In TransNormer, DiagAttention is used at the lower layers to avoid dilution issues. However, this leads to a lack of global interaction between tokens. 
In TNL, we leverage LRPE~\citep{qin2023linearized} with exponential decay~\citep{alibi,qin2023toeplitz,peng2023rwkv} to address this issue, retaining full attention at the lower layers. The expression of our position encoding is as follows:
\begin{equation}
a_{ts}=\mathbf q_t^{\top} \mathbf k_s \lambda^{t-s}\exp^{i\theta(t-s)}.
\label{eq: pe}
\end{equation}
which we call LRPE-d - Linearized Relative Positional Encoding with exponential decay. Similar to the original LRPE, we set $\theta$ to be learnable. We empirically find that rather than applying LRPE-d to every layer, applying it to the first layer and keeping other layers with exponential decay can speed up training by approximately 15-20\% but only with a subtle effect on the performance.

Note that this position encoding is fully compatible with Linear Attention, as it can be decomposed with respect to $s$ and $t$ separately.
The value of $\lambda$ for the $h$-th head in the $l$-th layer (assuming there are a total of $H$ heads and $L$ layers) is given by:
\begin{equation}
\label{eq:decay}
\textstyle
\lambda =\exp\left(-\frac{8h}{H}\times \left(1-\frac{l}{L}\right) \right).
\end{equation}
Here, $\frac{8h}{H}$ corresponds to the decay rate of the $h$-th head, while $ \left(1-\frac{l}{L}\right)$ corresponds to the decay rate of the $l$-th layer. The term $ \left(1-\frac{l}{L}\right)$ ensures that the Theoretical Receptive Fields (TRF)~\cite{qin2023exploring} at the lower layers is smaller compared to the higher layers, which aligns with TransNormer's motivation. We choose $\lambda$ to be non-learnable since we empirically found that gradients become unstable when $\lambda$ is learnable, leading to NaN values. Note that this positional encoding is still compatible with Lightning Attention, with the specific algorithm detailed in Appendix~\ref{lawd}~\ref{lightning with decay}.

\textbf{Gating Mechanism} Gate can enhance the performance of the model and smooth the training process. In TNL, we adopt the approach from Flash~\citep{hua2022transformer} and use Gated Linear Attention (GLA) in token mixing:
\begin{equation}
\begin{aligned}
\mathbf{O}&=\mathrm{Norm}(\mathbf{Q} \mathbf{K}^{\top}\mathbf{V})\odot \mathbf{U}, \mathbf Q=\phi(\mathbf X \mathbf W_q),\\
\mathbf K&=\phi(\mathbf X \mathbf W_k),\mathbf V=\mathbf X \mathbf W_v,\mathbf U=\mathbf X \mathbf W_u.
\label{eq: gla}
\end{aligned}
\end{equation}
We choose $\phi$ to be Swish~\citep{1710.05941} activation function as we empirically find that it outperforms other activation functions.

\begin{figure*}[t]
\centering
\includegraphics[width=1\textwidth]{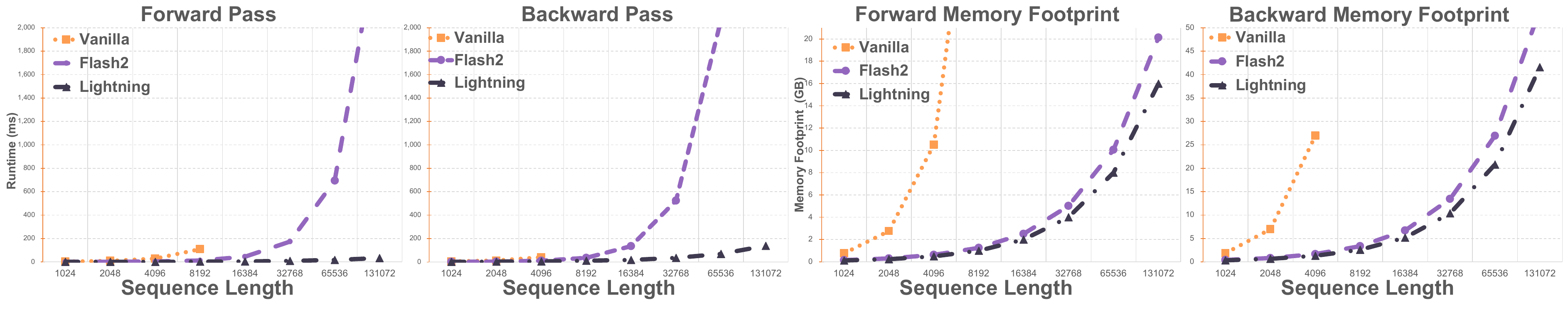}
\caption{\textbf{Comparative Analysis of Speed and Memory Usage}: Vanilla represents norm linear attention in pytorch~\cite{qin-etal-2022-devil}, Flash2 represents FlashAttention-2. Left two sub-figures: Runtime in milliseconds for the forward and backward pass across varying sequence lengths. Right two sub-figures: Memory utilization (in GB) during the forward and backward pass at different sequence lengths.}
\label{fig: timecomp}
\end{figure*}

To further accelerate the model, we propose Simple GLU (SGLU), which removes the activation function from the original GLU structure as the gate itself can introduce non-linearity. Therefore, our channel mixing becomes:
\begin{equation}
\mathbf {O}=[\mathbf V\odot \mathbf U]\mathbf W_o,\\
\mathbf V=\mathbf X \mathbf W_v,\mathbf U=\mathbf X \mathbf W_u.
\label{eq: glu}
\end{equation}
We empirically find that not using an activation function in GLU will not lead to any performance loss.

\textbf{Tensor Normalization}
The origin NormAttention introduced in TransNormer~\citep{qin-etal-2022-devil} is as follows:
\begin{equation}
\mathbf{O}=\mathrm{Norm}(\mathbf{Q} \mathbf{K}^{\top}\mathbf{V})
\end{equation}
In TransNormerLLM, we replace the origin RMSNorm with a new simple normalization function called SimpleRMSNorm, abbreviated as SRMSNorm:
\begin{equation}
\textstyle
\mathrm{SRMSNorm}(\mathbf x)=\frac{\mathbf x}{\|\mathbf x \|_2/\sqrt d}.
\end{equation}
We empirically find that using SRMSNorm does not lead to any performance loss.

\section{Experiments}
We carried out thorough experiments on TNL models and lightning attention. We implemented our models on the Metaseq framework~\cite{zhang2022opt} with Pytorch~\cite{paszke2019pytorch}. The Lightning Attention was executed through Triton~\cite{Tillet2019TritonAI}. All the experiments were conducted on A100 80G GPU clusters. The assessment of our work is divided into three main sections: \rnum{1}) We evaluated the efficiency and accuracy of the Lightning Attention module; \rnum{2}) We further benchmarked our TNL models' performance on standard small-scale corpus and LLM benchmarks and compared their training and inference speeds with STOA models; \rnum{3}) We also provide an ablation study on the design of TNL.

\subsection{Lightning Attention Evaluation}
Since our Lightning Attention is an exact implementation of norm linear attention~\cite{qin-etal-2022-devil}, we compared the speed and memory usage between its original pytorch implementation (named Vanilla) and our Lightning Attention. As a reference, we have also included FlashAttention-2~\cite{dao2023flashattention2} (named Flash2), which is currently the SOTA implementation of softmax attention. As shown in Fig.~\ref{fig: timecomp}, Lightning Attention shows remarkable linear growth of processing time in both forward and backward passes, whereas Vanilla and Flash2 exhibit quadratic growth. In terms of memory footprint, Vanilla tends to rapidly exhaust memory resources. Lightning Attention shows a similar trend to Flash2 but requires less memory.

\begin{table}[t]
    \centering
    \small 
    \vspace{-2mm}
    \caption{\textbf{Results on Wikitext-103} (TNN\cite{qin2023toeplitz}'s setting). $\downarrow$ means \textit{lower is better}.}
     \setlength{\tabcolsep}{0.28cm}
     \label{lm}
    \begin{tabular}{ll|lll}
    \toprule
        & Model & \makecell[c]{PPL \\(val)$\downarrow$} & \makecell[c]{PPL \\(test)$\downarrow$} &  \makecell[c]{Params\\(M)} \\ \hline
        \multirow{7}{*}{\textit{Attn-based}}
        & Transformer & 24.40 & {24.78} & 44.65 \\ 
        & FLASH & 25.92 & 26.70 & 42.17 \\ 
        & 1+elu & 27.44 & 28.05 & 44.65 \\ 
        & Performer & 62.50 & 63.16 & 44.65 \\
        & cosFormer & 26.53 & 27.06 & 44.65 \\ 
        & {TN1} & 24.43 & 25.00 &44.64\\ 
        & {TN2} & 24.50 & 25.05 &44.64 \\           \hline
        \multirow{3}{*}{\textit{MLP-based}}
        & Syn(D) & 31.31 & 32.43 & 46.75 \\ 
        & Syn(R) & 33.68 & 34.78 & 44.65 \\ 
        & gMLP   & 28.08 & 29.13 & 47.83 \\ \hline
        \multirow{6}{*}{\textit{RNN-based}}
        & S4    & 38.34 & 39.66 & 45.69 \\ 
        & DSS   & 39.39 & 41.07 & 45.73 \\ 
        & GSS   & 29.61 & 30.74 & 43.84 \\ 
        & RWKV  & 24.31 & 25.07 & 46.23\\
        & LRU  & 29.86 & 31.12 & 46.24\\
        & {HGRN} & {24.14} & 24.82 &46.25 \\ \hline
       \textit{FFT-based} & TNN  & {23.98} & {24.67} & 48.68 \\ \hline   
        \textit{Ours} & TNL & {23.46} & 24.03 &45.45 \\
    \bottomrule
    \end{tabular}
    \label{tab:std_benchmark}
    \vspace{3mm}
\end{table}

\definecolor{Gray}{gray}{0.95}
\begin{table*}[t]
    \centering
    \small
    \vspace{-2mm}
    \caption{\textbf{Performance Comparison on Commonsense Reasoning and Aggregated Benchmarks.} For a fair comparison, we report competing methods' results reproduced by us using their released models. 
    Official results are denoted in \textit{italics}. PS: parameter size (billion). T: tokens (billion). HS: HellaSwag. WG: WinoGrande. }
    \vspace{3mm}
    \setlength{\tabcolsep}{1.8mm}
        \begin{tabular}{p{2.3cm}|cc|ccccccc|cc}
        \toprule
            Model & PS & T & BoolQ & PIQA & HS & WG & ARC-e & ARC-c & OBQA & MMLU & C-Eval \\ \hline
             & B & B &acc &acc &acc\_norm &acc &acc &acc\_norm &acc\_norm  &acc-5shot &acc-5shot \\ \midrule

            OPT         & 0.35 & 0.30 & 57.74 & 64.58 & 36.69 & 52.49 & 44.02 & 23.89 & 28.20 & 26.02 & 25.71 \\ 
            Pythia      & 0.40 & 0.30 & 60.40 & 67.08 & 40.52 & 53.59 & 51.81 & 24.15 & 29.40 & 25.99 & 24.81 \\ 
            RWKV        & 0.43 & - & - & \textit{67.52} & \textit{40.90} & \textit{51.14} & \textit{52.86} & \textit{25.17} & \textit{32.40} & 24.85 & - \\
    \rowcolor{Gray} TNL & 0.39 & 1.0 &62.14 &66.70  &46.27	&54.46	 &55.43 &27.99 &32.40 &25.90 & 25.24 \\  \hline

            OPT     & 1.3 & 0.3 & 57.77 & 71.71 & 53.70 & 59.35 & 57.24 & 29.69 & 33.20 & 24.96 & 25.32 \\ 
            Pythia  & 1.4 & 0.3 & 60.73 & 70.67 & 47.18 & 53.51 & 56.99 & 26.88 & 31.40 & 26.55  & 24.25 \\
            RWKV    & 1.5 &- & - & \textit{72.36} & \textit{52.48} & \textit{54.62} & \textit{60.48} & \textit{29.44} & \textit{34.00}  & 25.77 & - \\ 
            Falcon  & 1.0 & 0.35 & 61.38 & 75.14 & 61.50 & 60.30 & 63.38 & 32.17 & 35.60 & 25.28 & 25.66 \\ 
    \rowcolor{Gray} TNL & 1.0 & 1.2 & 63.27 & 72.09 & 56.49 & 60.38 & 63.68 & 35.24 & 36.60  & 27.10 &26.01\\ \hline

        OPT & 6.7 & 0.3 & 66.18 & 76.22 & 67.21 & 65.19 & 65.66 & 34.64 & 37.20 & 24.57& 25.32 \\ 
        Pythia & 6.9 & 0.3 & 63.46 & 75.14 & 63.92 & 60.77 & 67.34 & 35.41 & 37.00  & 24.64 & 26.40 \\ 
        RWKV & 7.4 & - & - & \textit{76.06} & \textit{65.51} & \textit{61.01} & \textit{67.80} & \textit{37.46} & \textit{40.20} & 24.96 & - \\ 
        Falcon & 7.2 & 1.5 & 73.73 & 79.38 & 76.3 & 67.17 & 74.62 & 43.60 & 43.80  & 27.79 & 22.92 \\
        Baichuan2 & 7.0 & 2.6 & 72.72 & 76.50 & 72.17 & 68.35 & 75.17 & 42.32 & 39.60 & \textit{54.16}  & \textit{54.00} \\
        ChatGLM2 & 7.1 & 1.4 & 77.65 & 69.37 & 50.51 & 57.62 & 59.13 & 34.30 & 37.00 & \textit{45.46}  & \textit{52.55} \\
       
        OpenLLaMAv2 & 6.7 & 1.0 & 72.20 & 78.84 & 74.51 & 65.67 & 72.39 & 41.30 & 41.00  & 41.29  & 30.01 \\ 
        LLaMA1 & 6.7 & 1.0 & \textit{76.50} & \textit{79.80} & \textit{76.10} & \textit{70.10} & \textit{72.80} & \textit{47.60} & \textit{57.20}  & \textit{35.10} & 25.72 \\ 
        LLaMA2 & 6.7 & 2.0  & \textit{77.68} & \textit{78.07} & \textit{76.02} & \textit{68.98} & \textit{76.30} & \textit{46.33} & \textit{44.20}   & \textit{45.30} & 33.20 \\
        \rowcolor{Gray}
       TNL & 6.8 & 1.4 &75.87&	80.09 &75.21 & 66.06 &75.42 & 44.40 & 63.40  & 43.10 & 43.18 \\ \hline
       OPT          & 13  & 0.3 & 65.93 & 75.84 & 69.83 & 65.19 & 67.00 & 35.75 & 38.80  & 24.68  & 22.23 \\
       Pythia       & 12  & 0.3 & 65.72 & 76.17 & 68.85 & 66.22 & 70.62 & 38.23 & 41.00  & 25.51  & 22.99 \\
       RWKV         & 14  & -   & 70.12 & 78.51 & 71.49 & 64.48 & 72.35 & 40.87 & 41.00  & 26.49  & 26.49 \\
       Baichuan2    & 13  & 2.6 & 79.20 & 77.31 & 75.27 & 70.01 & 77.36 & 47.01 & 43.80  & 57.02  & 59.63 \\
       OpenLLaMAv2    & 13  & 1.0 & 72.29 & 77.58 & 72.07 & 70.09 & 75.42 & 43.86 & 43.00  & 43.43  & 25.95 \\
       LLaMA1       & 13  & 1.0 & 77.95 & 79.16 & 79.06 & 72.61 & 77.40 & 47.70 & 44.80  & 47.62  & 32.13  \\
       LLaMA2       & 13  & 2.0 & 80.61 & 79.11 & 79.35 & 72.38 & 79.34 & 48.98 & 35.20  & 55.70  & 38.34  \\

\rowcolor{Gray} TNL & 15  & 2.0  &76.64	&81.56	&82.18	&75.61	&77.61	&50.51	&46.40 & 60.06 & 53.01  \\  

        \bottomrule              
        \end{tabular}
    \label{tab:benchmark}
\end{table*}


\subsection{TNL Evaluation}
\textbf{Performance Evaluation}
In Table~\ref{tab:std_benchmark}, we present an evaluation across various 40M models on a standard dataset. This includes models based on attention/linear attention mechanisms~\cite{vaswani2017attention,dao2022flashattention,katharopoulos2020transformers,zhen2022cosformer,qin-etal-2022-devil}, MLPs (Multi-Layer Perceptrons)~\cite{tay2021synthesizer,liu2021pay}, RNNs (Recurrent Neural Networks)~\cite{gu2022efficiently,gupta2022diagonal,mehta2022long,peng2023rwkv,lru}, FFTs (Fast Fourier Transforms)~\cite{qin2023toeplitz}, and our model. TNL records the lowest perplexity on test set after trained on the Wikitext-103 dataset.

We also scaled up our model to 1B and 3B parameters and compared its training loss with top-tier LLM structures such as LLaMA-FA2~\cite{touvron2023llama, dao2023flashattention2}, HGRN~\cite{qin2023hierarchically}, and TNN~\cite{qin2023toeplitz}. For a fair comparison, we retrain all models on the same 30B corpus and plot the training losses in Fig.~\ref{fig: tgs}. TNL achieved the lowest training losses in both 1B and 3B parameters.

\textbf{Efficiency Evaluation}
In Fig.~\ref{fig: tgs}, we present a comparative analysis of training speeds under the same corpora and hardware setups. This comparison encompasses four variants: TNL, LLaMA-FA2~\cite{touvron2023llama, dao2023flashattention2}, HGRN~\cite{qin2023hierarchically} , and TNN~\cite{qin2023toeplitz}. Our findings show that during both the forward and backward passes, the TGS (tokens per GPU per second) for TNL remains consistently high, while the other three models exhibit a rapid decline when sequence length is scaled from 1K to 128K. This pattern suggests that Lightning Attention offers a significant advancement in managing extremely long sequence lengths in LLM.

\begin{figure}[t]
    \centering
\includegraphics[width=1\columnwidth]{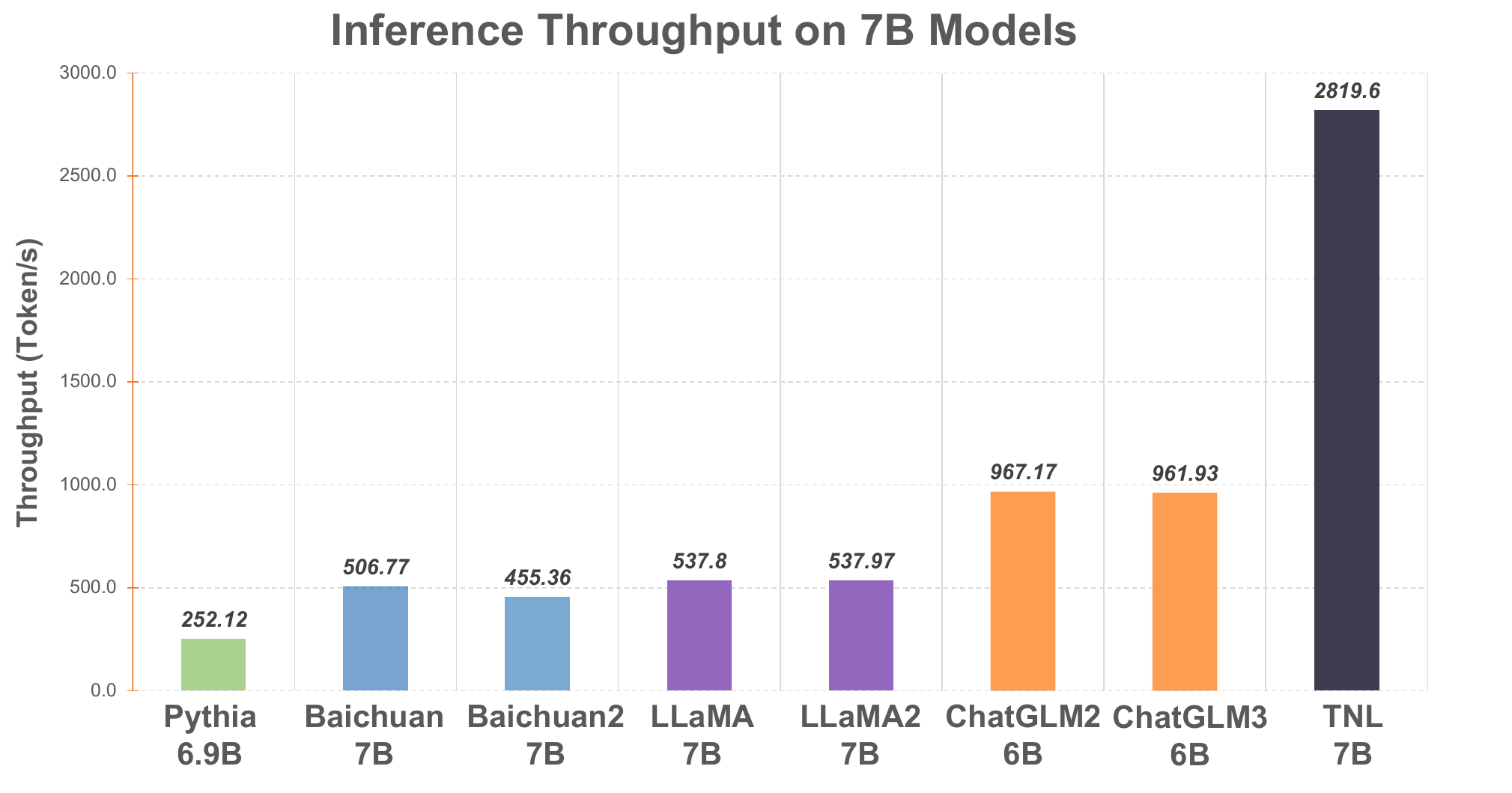}
    \vspace{-6mm}
    \caption{\textbf{Inference Throughput Comparison.} We measure the inference throughput of various 7B LLM models on a A100 80G GPU. Batch sizes for models are chosen to optimize GPU utilization without exceeding memory limits. Each model is tested with a 512-token input prompt and can generate up to 1024 new tokens. Reported throughput is averaged from 20 attempts.}
    \label{fig: throughput-7b}
    \vspace{3mm}
\end{figure}

\textbf{Inference Evaluation}
We conduct an inference throughput comparison on various 7B large language models using their standard codebase from Huggingface, as detailed in Fig.~\ref{fig: throughput-7b}. TNL with Lightning Attention demonstrates a significant advantage, achieving a throughput rate that up to 11$\times$ higher than transformer structure models. 

\textbf{Benchmark Results}
In order to validate the effectiveness of TNL, we pretraining 385M, 1B, 7B, and 15B models on self-collected datasets, the details of the data are in the Appendix~\ref{app:corpus}, and tested on Commonsense Reasoning Task, MMLU\citep{hendrycks2021measuring}, C-Eval\citep{huang2023ceval}, and SCROLLS~\cite{shaham2022scrolls}. For comparison, we selected several open-source models as competitors, including Transformer-based models such as OPT~\citep{zhang2022opt}, Pythia~\citep{biderman2023pythia}, BLOOM~\citep{workshop2023bloom}, GPT-Neo~\citep{gpt-neo}, Falcon~\citep{falcon40b}, LLaMA~\citep{touvron2023llama,2307.09288}, OpenLLAMA ~\citep{openlm2023openllama}, Baichuan~\citep{baichuan2023baichuan2}, ChatGLM~\citep{zeng2022glm,du2022glm}, and non-Transformer model RWKV~\citep{2305.13048}. It can be observed in Table~\ref{tab:benchmark} and Table~\ref{tab:benchmark_scrolls} that, compared to these models, TNL remains highly competitive.
\begin{compactitem}
    \item We report BoolQ \citep{clark2019boolq}, PIQA \citep{bisk2019piqa}, SIQA \citep{sap2019socialiqa}, HellaSwag \citep{zellers2019hellaswag}, WinoGrande \citep{sakaguchi2019winogrande}, ARC easy and challenge \citep{clark2018think} and OpenBookQA \citep{mihaylov2018suit}. We report 0-shot results for all benchmarks using LM-Eval-Harness \citep{leo2021evalharness}. All of our models achieve competitive performance compared to existing state-of-the-art LLMs, showcasing a remarkable ability to comprehend and apply commonsense reasoning.
    \item We report the overall results for MMLU \citep{hendrycks2021measuring}, C-Eval \citep{huang2023ceval}. Official scripts were used for evaluating MMLU and C-Eval, with all evaluation results being conducted with a 5-shot setup. In comparison to top-tier open-source models available in the industry, our models have demonstrated matched performance in both English and Chinese benchmarks.
    \item On SCROLLS~\cite{shaham2022scrolls} benchmark, we assess the large language models trained on a 1 billion parameter and pre-trained using a sequence length of 2048. We present zero-shot performance results for all benchmarks using the LM-Eval-Harness \citep{leo2021evalharness}. For generation tasks within SCROLLS, we employ a greedy search with hyper-parameters top\_k set to 5 and top\_p set to 1. Our models consistently match or surpass the performance of existing state-of-the-art LLMs in these tasks.
\end{compactitem}

\subsection{TNL Ablation}
We conducted an extensive ablation analysis on various components of TNL, including positional encoding, gating mechanisms, GLA activation functions, GLU activation functions, and normalization functions.

\definecolor{Gray}{gray}{0.95}
\begin{table*}[t]
    \centering
    \small
    \vspace{-2mm}
    \caption{\textbf{Performance Comparison on SCROLLS~\cite{shaham2022scrolls}:} A review of models up to 1 billion parameters on 2048 pre-training sequence length. PS: parameter size (billion). T: tokens (billion).}
    \vspace{2mm}
    \setlength{\tabcolsep}{2mm}
        \begin{tabular}{p{1.9cm}|cc|ccccccc|c}
        \toprule
            Model & PS & T & GovRep & SumScr & QMSum &Qspr & Nrtv & QALT & CNLI & Avg \\ \hline
             & B & B  &ROUGE-1/2/L &ROUGE-1/2/L &ROUGE-1/2/L &F1 &F1 &EM &EM & \\ \midrule

        OPT         &0.35 &0.30  &2.52/0.53/2.24 &7.72/0.68/6.52 &8.05/1.79/6.6 &13.13 &10.13 &29.05 &9.16 &7.55 \\
        Pythia      &0.40 &0.30  &4.96/1.19/4.06 &2.03/0.2/1.79 &7.51/1.43/6.08 &15.27 &8.24 &28.57 &15.24 &7.43 \\
        RWKV        &0.43 &-  &1.63/0.4/1.49 &0.94/0.11/0.76 &10.19/2.26/8.06 &13.16 &9.76 &26.32 &16.49 &7.04 \\
        \rowcolor{Gray} 
        TNL         &0.39 &1.0  &3.67/1.16/3.14 &8.27/0.82/6.91 &13.62/3.29/10.95 &14.29 &11.69 &28.14 &17.36 &9.48 \\ \hline

        OPT     &1.3 &0.3  &5.7/2.09/4.41 &10.17/0.82/8.29 &12.36/3.15/9.85 &18.37 &13.42 &29.15 &12.44 &10.02 \\
        Pythia  &1.4 &0.3  &4.03/1.25/3.33 &8.34/0.87/6.97 &13.17/3.4/10.92 &16.09 &11.91 &28.72 &9.06 &9.08 \\ 
        Falcon  &1.0 &0.35 &2.74/0.67/2.37 &10.95/1.28/8.66 &13.29/3.09/10.58 &16.17 &12.91 &29.19 &14.75 &9.74 \\
        \rowcolor{Gray}
        TNL &1.0 &1.2  &6.81/2.30/5.25 &12.28/1.23/9.27 &14.60/3.51/11.62 &15.02 &14.66 &28.72 &37.32 &12.51 \\

        \bottomrule              
        \end{tabular}
    \label{tab:benchmark_scrolls}
\end{table*}

\begin{table}[H]
    \centering
  \small
  \vspace{-3mm}
    \caption{\textbf{Exploration of Positional Encoding.} LRPE-d leads to the most optimal outcome.}
    \vspace{1mm}
    \label{tab:pe}
    \centering
    \setlength{\tabcolsep}{3.2mm}
        \begin{tabular}{lllll}
        \toprule
        PE Methods & Params & Updates & Loss &PPL\\ \hline
         Mix & 385M  & 100K& 2.248 &4.770\\ 
            APE & 386M & 100K & 2.387 & 5.253\\
            Exp-Decay & 385M & 100K & 2.267 & 4.834 \\ 
            LRPE & 385M & 100K & 2.287 & 4.899 \\             
            LRPE-d & 385M  & 100K & 2.236 &4.728  \\ 
   \bottomrule
    \end{tabular}
\end{table}

\textbf{Positional Encoding:} in our experiment comparing various PE strategies—Mix, Absolute Positional Encoding (APE), LRPE, Exponential Decay, and LRPE-d—our approach and LRPE-d demonstrated superior performance. We chose the Mix method for its ability to enhance training speed by up to 20\%, despite being slightly less effective than LRPE-d.

\begin{table}[H]
    \centering
    \vspace{2mm}
  \small    \caption{\textbf{Ablations on decay temperature.} The results of decay temperature proved to be superior.}
  \vspace{1mm}
    \label{tab:temperature}
    \centering
    \setlength{\tabcolsep}{2.7mm}
        \begin{tabular}{lllll}
        \toprule
            Temperature & Params & Updates & Loss &PPL \\ \hline
            w/ temperature &  385M & 100K& 2.248 &4.770\\ 
            w/o temperature & 385M & 100K & 2.258 &4.804 \\
        \bottomrule
    \end{tabular}
\end{table}
We also perform ablations on the decay temperature $\left(1-\frac{l}{L}\right)$ in Eq.~\ref{eq:decay}. The perplexity of the TNL is reduced by adding the decay temperature, as shown in Table~\ref{tab:temperature}.

\begin{table}[H]
    \centering
  \small
  \vspace{-4mm}
    \caption{\textbf{Ablations on gating mechanism.} The performance with the gate proved to be superior.}
    \vspace{1mm}
    \label{tab:gate}
    \centering
    \setlength{\tabcolsep}{3.8mm}
        \begin{tabular}{lllll}
        \toprule
            Gate & Params & Updates & Loss &PPL\\ \hline
            w/ gate & 385M & 100K & 2.248 &4.770\\ 
            w/o gate & 379M & 100K & 2.263 &4.820 \\
        \bottomrule
    \end{tabular}
\end{table}

\textbf{Gating Mechanism:} we further investigate the impact of integrating a gating mechanism. According to the data presented in Table~\ref{tab:gate}, enabling the gate decreased the loss value from 2.263 to 2.248.

\begin{table}[h]
    \centering
  \small
  \vspace{-2mm}
    \caption{\textbf{Exploration of Normalization Function.} The deviation in results among the bellowing normalization functions is minimal.}
    \vspace{1mm}
    \label{tab:norm}
    \centering
    \setlength{\tabcolsep}{3.3mm}
     \begin{tabular}{lllll}
       \toprule
        Norm Type & Params & Updates & Loss &PPL \\ \hline
        SRMSNorm & 385M  & 100K & 2.248 &4.770 \\  
        RMSNorm & 385M & 100K & 2.247  & 4.766\\ 
        LayerNorm & 385M & 100K & 2.247 &4.765 \\ 
        \bottomrule
    \end{tabular}
    \vspace{4mm}
\end{table}

\vspace{2mm}
\textbf{Normalization Functions:} our study involved testing various normalization techniques—SRMSNorm, RMSNorm, and LayerNorm—on TNL, finding little difference in their effectiveness. However, we enhanced SRMSNorm using Triton, resulting in notable improvements in processing speed for larger dimensions.

\vspace{3mm}
\textbf{GLA Activation Functions:} in our study on the GLA (Gated Linear Attention) mechanism, we evaluated activation functions, finding Swish and 1+elu to perform similarly, as detailed in Table ~\ref{tab:gla_act}. However, due to NaN issues with 1+elu in our 7B model, we opted for Swish.

\begin{table}[h]
    \centering
  \small
  \vspace{-4mm}
    \caption{\textbf{Ablations on GLA activation functions.} The results obtained from different activation functions were virtually identical.}
    \vspace{1mm}
    \label{tab:gla_act}
    \centering
    \setlength{\tabcolsep}{3.6mm}
         \begin{tabular}{lllll}
       \toprule
        GLA Act & Params & Updates & Loss &PPL \\ \hline
        Swish & 385M & 100K & 2.248 &4.770\\
        No Act & 385M & 100K & 2.283 &4.882 \\ 
        1+elu	& 385M	& 100K	& 2.252	& 4.767 \\
        \bottomrule
    \end{tabular}
\end{table}

\textbf{GLU Activation Functions:} our experiment additionally involved removing the activation function from the Gated Linear Units (GLU), showing minimal effect on outcomes as per Table~\ref{tab:glu_act}. Therefore, we opted for the Simple Gated Linear Units (SGLU) configuration in our model.

\begin{table}[h]
    \centering
  \small
    \vspace{-4mm}
    \caption{\textbf{Ablations on GLU activation functions.} The exclusion of the activation function had no negative impact on the results.}
    \vspace{1mm}
    \label{tab:glu_act}
    \centering
    \setlength{\tabcolsep}{3.6mm}
         \begin{tabular}{lllll}
        \toprule
        GLU Act & Params & Updates & Loss &PPL \\ \hline
        No Act & 385M   & 100K & 2.248 &4.770 \\  
        Swish & 385M & 100K & 2.254 & 4.788\\ 
    \bottomrule
    \end{tabular}
\end{table}

\vspace{6mm}
\section{Conclusion}
\label{sec: conclusion}
We introduced Lightning Attention, the first linear attention implementation that unleashed the full power of linear attention. As a result, our Lightning Attention can handle various sequence lengths with a constant speed under a constant memory footprint. 
The main concept is to divide the calculation of attention into intro-blocks and inter-blocks, while applying distinct computation techniques to perform the calculation.
A new architecture, TNL, that is tailored for Lightning Attention is presented. TNL outperforms existing efficient language models in terms of both efficiency and accuracy and achieves competitive performance compared to state-of-the-art large language models using conventional transformer architectures.

\vspace{3mm}
\section*{Acknowledgement}
This work is partially supported by the National Key R\&D Program of China (NO.2022ZD0160100). We thank Songlin Yang for the helpful discussions.

\vspace{3mm}
\section*{Impact Statement}
\label{sec: statement}
The introduction of Lightning Attention and its accompanying architecture TNL, heralds significant shifts in machine learning, particularly in language model efficiency and accessibility. By addressing the limitations of linear attention in varying sequence lengths without increasing memory consumption, this advancement democratizes access to state-of-the-art language models, potentially reducing the computational and environmental footprint of large-scale AI systems. Ethically, it underscores a move towards more sustainable AI practices, yet raises questions about the proliferation of powerful language models and their societal impacts, including concerns over privacy, misinformation, and the digital divide.

\bibliography{lightning2}
\bibliographystyle{icml2024}

\appendix
\newpage
\begin{center}
\textbf{\large Appendix}
\end{center}

\section{Linear Attention with decay}
\label{lawd}
TransNormerLLM uses LRPE-d positional encoding, which has the following format:
\begin{equation}
\small
a_{ts}=\mathbf q_t^{\top} \mathbf k_s \lambda^{t-s}\exp^{i\theta(t-s)}.
\end{equation}
According to~\citep{qin2023linearized}, Lrpe can be decomposed into $\mathbf q$ and $\mathbf k$, so we consider the following simplified form:
\begin{equation}
\label{eq:left}
\small
\begin{aligned}
a_{ts}&=\mathbf q_t^{\top} \mathbf k_s \lambda^{t-s},\\
\mathbf o_t^{\top} &= \sum_{s=1}^t a_{ts}\mathbf v_t^{\top}\\
&=\sum_{s=1}^t \mathbf q_t^{\top} \mathbf k_s \lambda^{t-s}\mathbf v_s^{\top} \\
&=\mathbf q_t^{\top} \sum_{s=1}^t  \mathbf k_s \lambda^{t-s}\mathbf v_s^{\top} \\
&\triangleq \mathbf q_t^{\top} \overline{\mathbf {kv}}_t.
\end{aligned}
\end{equation}
We call this Linear Attention with decay and prove it's  equivalent to the recurrence form:
\begin{equation}
\label{eq:right}
\mathbf {kv}_0=0, \mathbf {kv}_t=\lambda \mathbf {kv}_{t-1} + \mathbf k_t\mathbf v_t^\top, \mathbf o_t^{\top} = \mathbf q_t^{\top} \mathbf {kv}_t.
\end{equation}

We will use induction to prove $\overline{\mathbf{kv}}_t = \mathbf{kv}_t$.

\small
    \textbf{Base Case ($n=1$):}
    \begin{equation}
    \begin{aligned}
    \overline{\mathbf{kv}}_1 &= \mathbf {k}_1\mathbf {v}_1^{\top}= \mathbf{kv}_1 .
    \end{aligned}
    \end{equation}
Assume the statement holds for $n=m-1$, i.e., ${\mathbf {\overline{kv}}}_{m-1}=\mathbf {kv}_{m-1}$. Then, when $n=m$:
    \begin{equation}
    \begin{aligned}
    {\mathbf {\overline{kv}}}_{m} &= \sum_{s=1}^m  \mathbf k_s \lambda^{m-s}\mathbf v_s^{\top} \\
&=\lambda \sum_{s=1}^{m-1}  \mathbf k_s \lambda^{m-1-s}\mathbf v_s^{\top} + \mathbf k_m \mathbf v_m^{\top} \\
    &=\lambda {\mathbf {\overline{kv}}}_{m-1}+\mathbf k_m \mathbf v_m^{\top} \\
&= \lambda {\mathbf {{kv}}}_{m-1}+\mathbf k_m \mathbf v_m^{\top}  \\
    &={\mathbf {{kv}}}_{m},
    \end{aligned}
    \end{equation}
\normalsize
the statement holds. Therefore, by induction, the statement holds for all $n\geq 1$.


\section{Lightning Attention with decay}
\label{lightning with decay}
We extended Lightning Attention to accommodate Linear Attention with decay. The complete algorithm can be found in Algorithm~\ref{algo:Lightning Attention with decay fw pseudo},~\ref{algo:Lightning Attention with decay bw pseudo}, and the proof of correctness is provided in~\ref{proof}.

\begin{algorithm}[t]
\small
    \caption{Lightning Attention(with decay) Forward Pass}
    \label{algo:Lightning Attention with decay fw pseudo}
    \begin{algorithmic}
    \STATE{\textbf{Input:} $\mathbf Q,\mathbf K,\mathbf V \in \mathbb{R}^{n \times d}$, decay rate $\lambda \in \mathbb R^+$, block sizes $B$.}
    \STATE{Divide $\mathbf {X}$ into $T = \frac{n}{B}$ blocks $\mathbf X_1, \mathbf X_2, ...\mathbf X_{T}$ of size $B \times d$ each, where $\mathbf X\in \{\mathbf Q, \mathbf K, \mathbf V,\mathbf O \}$. }
     \STATE{Initialize mask $\mathbf M\in \mathbb R^{B\times B}$, where $\mathbf M_{ts} = \lambda^{t-s}$, if $t\ge s$, else 0.}
     \STATE{Initialize $\Lambda =\mathrm{diag}\{\lambda, \lambda^2,\ldots, \lambda^{B}\}\in \mathbb R^{B\times B} $.} 
     \STATE{Initialize $\mathbf {KV} =0\in \mathbb R^{d\times d} $.}
    \FOR{$t=  1,\ldots ,T$}
        \STATE{Load $\mathbf Q_t,\mathbf  K_t, \mathbf V_t \in \mathbb{R}^{B \times d}$ from HBM to on-chip SRAM.}
        \STATE{On chip, compute $\mathbf O_{\mathrm{intra}}= [(\mathbf Q_t \mathbf K_t^{\top }) \odot \mathbf M]\mathbf V_t$.}
        \STATE{On chip, compute $\mathbf{O}_{\mathrm{inter}} =\Lambda \mathbf Q_t (\mathbf {KV}) $.}
        \STATE{On chip, compute $\mathbf{KV} =\lambda^B \mathbf{KV}+ (\lambda^{B}\Lambda^{-1} \mathbf K_t)^{\top}  \mathbf V_t$.}
      \STATE{Write $\mathbf O_t=\mathbf O_{\mathrm{intra}}+ \mathbf{O}_{\mathrm{inter}}$ to HBM as the $t$-th block of $\mathbf O$.}
      \ENDFOR
      \STATE{return $\mathbf O$.}
\end{algorithmic}
\end{algorithm}

\begin{algorithm}[t]
\small
    \caption{Lightning Attention(with decay) Backward Pass}
    \label{algo:Lightning Attention with decay bw pseudo}
    \begin{algorithmic}
    \STATE{\textbf{Input:} $\mathbf Q,\mathbf K,\mathbf V,\mathbf{dO} \in \mathbb{R}^{n \times d}$, decay rate $\lambda \in \mathbb R^+$, block sizes $B$.}
    \STATE{Divide $\mathbf {X}$ into $T = \frac{n}{B}$ blocks $\mathbf X_1, \mathbf X_2, ...\mathbf X_{T}$ of size $B \times d$ each, where $\mathbf X\in \{\mathbf Q, \mathbf K, \mathbf V \}$. }
     \STATE{Divide $\mathbf {dX}$ into $T = \frac{n}{B}$ blocks $\mathbf {dX}_1, \mathbf {dX}_2, ...\mathbf {dX}_{T}$ of size $B \times d$ each, where $\mathbf X\in \{\mathbf Q, \mathbf K, \mathbf V, \mathbf O  \}$ }.
     \STATE{Initialize mask $\mathbf M\in \mathbb R^{B\times B}$, where $\mathbf M_{ts} = \lambda^{t-s}$, if $t\ge s$, else 0.}
     \STATE{Initialize $\Lambda =\mathrm{diag}\{\lambda, \lambda^2,\ldots, \lambda^{B}\}\in \mathbb R^{B\times B} $} .
      \STATE{Initialize $\mathbf {KV} =0, \mathbf{dKV}=0\in \mathbb R^{d\times d} $.}

  \FOR{$t=  1,\ldots ,T$}
        \STATE{Load $\mathbf K_t, \mathbf V_t, \mathbf O_t, \mathbf {dO}_t \in \mathbb{R}^{B \times d}$ from HBM to on-chip SRAM.}
    
        \STATE{On chip, compute $\mathbf {dQ}_{\mathrm{intra}} =[(\mathbf {dO}_t \mathbf V_t^{\top}) \odot \mathbf M] \mathbf{K}_t$.}

    \STATE{On chip, compute $\mathbf {dQ}_{\mathrm{inter}} =\Lambda \mathbf{dO}_t (\mathbf{KV})^{\top} $.}

     \STATE{On chip, compute $\mathbf{KV} =\lambda^B \mathbf{KV}+ (\lambda^{B}\Lambda^{-1} \mathbf K_t)^{\top}  \mathbf V_t$.}

     \STATE{Write $\mathbf {dQ}_t=\mathbf{dQ}_{\mathrm{intra}} + \mathbf{dQ}_{\mathrm{inter}}$ to HBM as the $t$-th block of $\mathbf {dQ}$.}
      \ENDFOR

    \FOR{$t=  T,\ldots ,1$}
        \STATE{Load $\mathbf Q_t, \mathbf K_t, \mathbf V_t, \mathbf O_t, \mathbf {dO}_t \in \mathbb{R}^{B \times d}$ from HBM to on-chip SRAM.}
        
    \STATE{On chip, compute $\mathbf{dK_{\mathrm{intra}}}=[(\mathbf {dO}_t \mathbf V_t^{\top}) \odot \mathbf M ]^{\top} \mathbf{Q}_t$.}
     \STATE{On chip, compute $\mathbf{dK_{\mathrm{inter}}}={(\lambda^{B} \Lambda^{-1} \mathbf V_t)}(\mathbf{dKV})^{\top}$.}
     
  \STATE{On chip, compute $\mathbf{dV_{\mathrm{intra}}}=[(\mathbf Q_t \mathbf K_t^{\top}) \odot \mathbf M ]^{\top} \mathbf{dO}_t$.}
     \STATE{On chip, compute $\mathbf{dV_{\mathrm{inter}}}=(\lambda^{B} \Lambda^{-1}\mathbf K_t) \mathbf{dKV}$.}
     
   \STATE{On chip, compute $\mathbf{dKV} =\lambda^B \mathbf{dKV}+ (\Lambda \mathbf Q_t)^{\top}  \mathbf {dO}_t $.}

     \STATE{Write $ \mathbf {dK}_t=\mathbf K_{\mathrm{intra}} +\mathbf K_{\mathrm{inter}} ,\mathbf {dV}_t=\mathbf V_{\mathrm{intra}} +\mathbf V_{\mathrm{inter}}$ to HBM as the $t$-th block of $ \mathbf {dK}, \mathbf {dV}$.}
      \ENDFOR
      \STATE{return $\mathbf {dQ, dK, dV}$.}
\end{algorithmic}
\end{algorithm}

\section{Proofs}
\label{proof}
Here we discuss linear attention with decay directly, because vanilla linear attention is the case of $\lambda =1$.
\subsubsection{Forward Pass}
During forward pass of Linear attention with decay, the $t$-th output can be formulated as
\begin{equation}
\mathbf{o}_t^{\top} = \mathbf{q}_t^{\top} \sum_{s \le t} \lambda^{t-s} \mathbf{k}_s \mathbf{v}_s^{\top}.
\end{equation}

In a recursive form, the above equation can be rewritten as
\begin{equation}
\begin{aligned}
\mathbf{kv}_0 &= 0 \in \mathbb{R}^{d \times d}, \\
\mathbf{kv}_t &= \lambda \mathbf{kv}_{t-1} + \mathbf{k}_t \mathbf{v}_t^{\top}, \\
\mathbf{o}_t^{\top} &= \mathbf{q}_t^{\top} (\mathbf{kv}_t),
\end{aligned}
\end{equation}
where
\begin{equation}
\mathbf{kv}_t = \sum_{s \le t} \lambda^{t-s} \mathbf{k}_s \mathbf{v}_s^{\top}.
\end{equation}
To perform tiling, let us write the equations in block form. Given the total sequence length $n$ and block size $B$, $\mathbf{X}$ is divided into $T = \frac{n}{B}$ blocks $\{\mathbf{X}_1, \mathbf{X}_2, \ldots, \mathbf{X}_T\}$ of size $B \times d$ each, where $\mathbf{X} \in \{\mathbf{Q}, \mathbf{K}, \mathbf{V}, \mathbf{O}\}$. 

We first define
\begin{equation}
\mathbf{KV}_0 =\mathbf 0\in \mathbb{R}^{d \times d}, \\
\mathbf{KV}_t = \sum_{s \le tB} \lambda^{tB-s} \mathbf{k}_s \mathbf{v}_s^{\top}.
\end{equation}
Given $\mathbf{KV}_t$, the output of $(t+1)$-th block, i.e., $tB+r$, with $1 \le r \le B$ is
\begin{equation}
\begin{aligned}
&\mathbf{o}_{tB+r}^{\top} \\
= &\mathbf{q}_{tB+r}^{\top} \sum_{s \le tB+r} \lambda^{tB+r-s} \mathbf{k}_s \mathbf{v}_s^{\top} \\
= &\mathbf{q}_{tB+r}^{\top}\left( \sum_{s=tB+1}^{tB+r} \lambda^{tB+r-s} \mathbf{k}_s \mathbf{v}_s^{\top} + \lambda^r \sum_{s \le tB} \lambda^{tB-s} \mathbf{k}_s \mathbf{v}_s^{\top} \right) \\
= & \mathbf{q}_{tB+r}^{\top} \sum_{s=tB+1}^{tB+r} \lambda^{tB+r-s} \mathbf{k}_s \mathbf{v}_s^{\top} + \lambda^r \mathbf{q}_{tB+r} \mathbf{kv}_{tB}^{\top} .
\end{aligned}
\end{equation}
Rewritten in matrix form, we have
\begin{equation}
\begin{aligned}
\mathbf{O}_{t+1}=  &
\underbrace{[(\mathbf{Q}_{t+1} \mathbf{K}_{t+1}^{\top}) \odot \mathbf{M}] \mathbf{V}_{t+1}}_{\mathrm{Intra\ Block}} \\
&+ \underbrace{\Lambda\mathbf{Q}_{t+1} (\mathbf{KV}_t)}_{\mathrm{Inter\ Block}},\\
\end{aligned}
\end{equation}
where
\begin{equation}
\begin{aligned}
\mathbf{M}_{ts} &= \begin{cases}
\lambda^{t-s} & t \ge s\\
0 &t < s
\end{cases} , \\
\Lambda&=\mathrm{diag}\{1, \ldots, \lambda^{B-1}\} .
\end{aligned}
\end{equation}
And the $\mathbf{KV}$ at $(t+1)$-th block can be written as
\begin{equation}
\begin{aligned}
\mathbf{KV}_{t+1} &= \sum_{s \le (t+1)B} \lambda^{(t+1)B-s} \mathbf{k}_s^{\top} \mathbf{v}_s \\
&= \lambda^B \sum_{s \le tB} \lambda^{tB-s} \mathbf{k}_s^{\top} \mathbf{v}_s + \sum_{s=tB+1}^{(t+1)B} \lambda^{(t+1)B-s} \mathbf{k}_s^{\top} \mathbf{v}_s \\
&= \lambda^B \mathbf{KV}_t + \left(\mathrm{diag}\{\lambda^{B-1}, \ldots, 1\} \mathbf{K}_{t}\right)^{\top} \mathbf{V}_{t} \\
&= \lambda^B \mathbf{KV}_t + \left(\lambda^{B}\Lambda^{-1} \mathbf{K}_{t}\right)^{\top} \mathbf{V}_{t}.
\end{aligned}
\end{equation}
The complete expression of the forward pass of Lightning Attention with decay can be found in Algorithm~\ref{algo:Lightning Attention with decay fw pseudo}.

\subsubsection{Backward Pass}
For backward pass, let us consider the reverse process.
First given $\mathbf{do}_t$, we have
\begin{equation}
\begin{aligned}
\mathbf{dq}_t^{\top} &= \mathbf{do}_t^{\top} \mathbf{kv}_t^\top \in \mathbb{R}^{1 \times d}, \\
\mathbf{dk}_t^{\top} &= \mathbf{v}_t ^{\top}\mathbf{dkv}_t^\top \in \mathbb{R}^{1 \times d}, \\
\mathbf{dv}_t^{\top} &= \mathbf{k}_t^{\top} \mathbf{dkv}_t \in \mathbb{R}^{1 \times d}, \\
\mathbf{dkv}_t &= \sum_{s \geq t} \lambda^{s-t} \mathbf{q}_s \mathbf{do}_s^{\top} \in \mathbb{R}^{d \times d}.
\end{aligned}
\end{equation}
By writing $\mathbf{dkv}_t$ in a recursive form, we get
\begin{equation}
\begin{aligned}
\mathbf{dkv}_{n+1} &= 0 \in \mathbb{R}^{d \times d}, \\\quad \mathbf{dkv}_{t-1} &= \lambda \mathbf{dkv}_t + \mathbf{q}_{t-1} \mathbf{do}_{t-1}^\top.
\end{aligned}
\end{equation}
To facilitate the understanding of tiling, let us consider the above equations in block style. Given the total sequence length $n$ and block size $B$, $\mathbf{X}$ is divided into $T = \frac{n}{B}$ blocks $\{\mathbf{X}_1, \mathbf{X}_2, \ldots, \mathbf{X}_T\}$ of size $B \times d$ each, where $\mathbf{X} \in \{ \mathbf{Q}, \mathbf{K}, \mathbf{V}, \mathbf{O}, \mathbf{dO} \}$. 

We first define
\begin{equation}
    \begin{aligned} 
    \mathbf{dKV}_{T+1}&=\mathbf 0 \in \mathbb{R}^{d \times d}, \\
    \mathbf{dKV}_t &= \sum_{s > tB} \lambda^{s-tB} \mathbf{q}_s \mathbf{do}_s^\top.
    \end{aligned}
\end{equation}

Then for the $(t+1)$-th block, i.e., $tB+r, 0 \leq r < B$, we have
\begin{equation}
\begin{aligned}
&\mathbf{dq}_{tB+r}^\top \\
=& \mathbf{do}_{tB+r}^\top \sum_{s \leq tB+r} \lambda^{tB+r-s} \mathbf{v}_s \mathbf{k}_s^\top \\
=& \mathbf{do}_{tB+r}^\top \left(\sum_{s=tB+1}^{tB+r} \lambda^{tB+r-s} \mathbf{v}_s \mathbf{k}_s ^\top
+ \lambda^r \sum_{s \leq tB} \lambda^{tB-s} \mathbf{v}_s\mathbf{k}_s ^\top 
\right) \\
=&\mathbf{do}_{tB+r}^\top \sum_{s=tB+1}^{tB+r} \lambda^{tB+r-s} \mathbf{v}_s \mathbf{k}_s^\top +  \lambda^r \mathbf{do}_{tB+r} \mathbf{kv}_{tB}^\top .
\end{aligned}
\end{equation}
In matrix form, we have
\begin{equation}
\begin{aligned}
\mathbf{dQ}_{t+1} =&
\underbrace{[(\mathbf{dO}_{t+1} \mathbf{V}_{t+1}^\top) \odot \mathbf{M}] \mathbf{K}_{t+1}}_{{\mathrm{Intra\ Block}}} \\
&+ \underbrace{\Lambda \mathbf{dO}_{t+1} (\mathbf{KV}_t^\top)}_{{\mathrm{Inter\ Block}}}.
\end{aligned}
\end{equation}
Since the recursion of $\mathbf{dK}_t$ steps from $t+1$ to $t$, given $\mathbf{KV}_{t+1}$, $\mathbf{dK}_{t}$ for the $t$-th block, i.e., at positions $(t-1)B+r, 0< r \le B$ is
\begin{equation}
\begin{aligned}
&\mathbf{dk}_{(t-1)B+r}^\top \\
= &\mathbf{v}_{(t-1)B+r}^\top \sum_{s \geq (t-1)B+r} \lambda^{s-(t-1)B-r} \mathbf{do}_s \mathbf{q}_s^\top \\
=& \mathbf{v}_{(t-1)B+r}^\top \left(
\sum_{s=(t-1)B+r}^{tB} \lambda^{tB+r-s} \mathbf{do}_s\mathbf{q}_s^\top  \right) \\
&+
 \mathbf{v}_{(t-1)B+r}^\top\left(\lambda^{B-r} \sum_{s > tB} \lambda^{s-tB} \mathbf{do}_s \mathbf{q}_s^\top \right)
\\
= &\mathbf{v}_{(t-1)B+r}^\top \sum_{s=(t-1)B+r}^{tB} \lambda^{tB+r-s} \mathbf{do}_s \mathbf{q}_s^\top \\
&+ \lambda^{B-r} \mathbf{v}_{(t-1)B+r} ^\top\mathbf{dKV}_{t}^\top.
\end{aligned}
\end{equation}
In matrix form, we get
\begin{equation}
\begin{aligned}
\mathbf{dK}_{t-1} =&
\underbrace{[(\mathbf{dO}_{t-1} \mathbf{V}_{t-1}^\top) \odot \mathbf{M}]^\top \mathbf{Q}_{t-1}}_{{\mathrm{Intra\ Block}}} \\
&+ \underbrace{
\lambda^{B} \Lambda^{-1}
\mathbf{V}_{t-1} (\mathbf{dKV}_t^\top)}_{{\mathrm{Inter\ Block}}}.
\end{aligned}
\end{equation}

Considering $\mathbf{dV}_t$ for the $t$-th block, i.e., at positions $(t-1)B+r, 0 < r \le B$, we have
\begin{equation}
\begin{aligned}
&\mathbf{dv}_{(t-1)B+r}^\top \\
=& \mathbf{k}_{(t-1)B+r}^\top \sum_{s \geq (t-1)B+r} \lambda^{s-(t-1)B-r} \mathbf{q}_s\mathbf{do}_s^\top  \\
=& \mathbf{k}_{(t-1)B+r}^\top \left(
\sum_{s=(t-1)B+r}^{tB} \lambda^{tB+r-s} \mathbf{q}_s^\top \mathbf{do}_s \right) \\
&+\mathbf{k}_{(t-1)B+r}^\top \left( \lambda^{B-r} \sum_{s> tB} \lambda^{s-tB} \mathbf{q}_s \mathbf{do}_s^\top \right) \\
=& \mathbf{k}_{(t-1)B+r}^\top \sum_{s=(t-1)B+r}^{tB} \lambda^{tB+r-s} \mathbf{q}_s \mathbf{do}_s^\top \\
&+ \lambda^{B-r} \mathbf{k}_{(t-1)B+r} ^\top\mathbf{dKV}_{t} .
\end{aligned}
\end{equation}
In matrix form, we get
\begin{equation}
\begin{aligned}
\mathbf{dV}_{t-1} =&
\underbrace{[(\mathbf{Q}_{t-1} \mathbf{K}_{t-1}^\top) \odot \mathbf{M}]^\top \mathbf{dO}_{t}}_{{\mathrm{Intra\ Block}}} \\
&+ \underbrace{\lambda^{B} \Lambda^{-1} \mathbf{K}_{t-1} (\mathbf{dKV}_t)}_{{\mathrm{Inter\ Block}}}.
\end{aligned}
\end{equation}
Finally, the recursive relation for $\mathbf{dKV}_t$ is 
\begin{equation}
\begin{aligned}
\mathbf{dKV}_{t}
&= \sum_{s > tB} \lambda^{s-tB} \mathbf{q}_s\mathbf{do}_s^\top  \\
&= \lambda^B \sum_{s > (t+1)B} \lambda^{s-(t+1)B} \mathbf{q}_s\mathbf{do}_s^\top  \\
&+ \sum_{s=tB+1}^{(t+1)B} \lambda^{s-tB} \mathbf{q}_s\mathbf{do}_s^\top  \\
&= \lambda^B \mathbf{dKV}_{t+1} + \left(\Lambda \mathbf{Q}_t \right)^\top \mathbf{dO}_t.
\end{aligned}
\end{equation}
Algorithm~\ref{algo:Lightning Attention with decay bw pseudo} describes the backward pass of Lightning Attention with decay in more detail.


\section{Corpus}
\label{app:corpus}
We gather an extensive corpus of publicly accessible text from the internet, totaling over $700$TB in size. The collected data are processed by our data preprocessing procedure as shown in Fig.~\ref{fig:data_preprocess}, leaving a $6$TB cleaned corpus with roughly 2 trillion tokens. We categorize our data sources to provide better transparency and understanding. The specifics of these categories are outlined in Table~\ref{tab:pretraing_data}. 

\subsection{Data Preprocessing}
\begin{figure*}[htbp]
    \centering
    \vspace{-0mm}
        \includegraphics[width=0.95\textwidth]{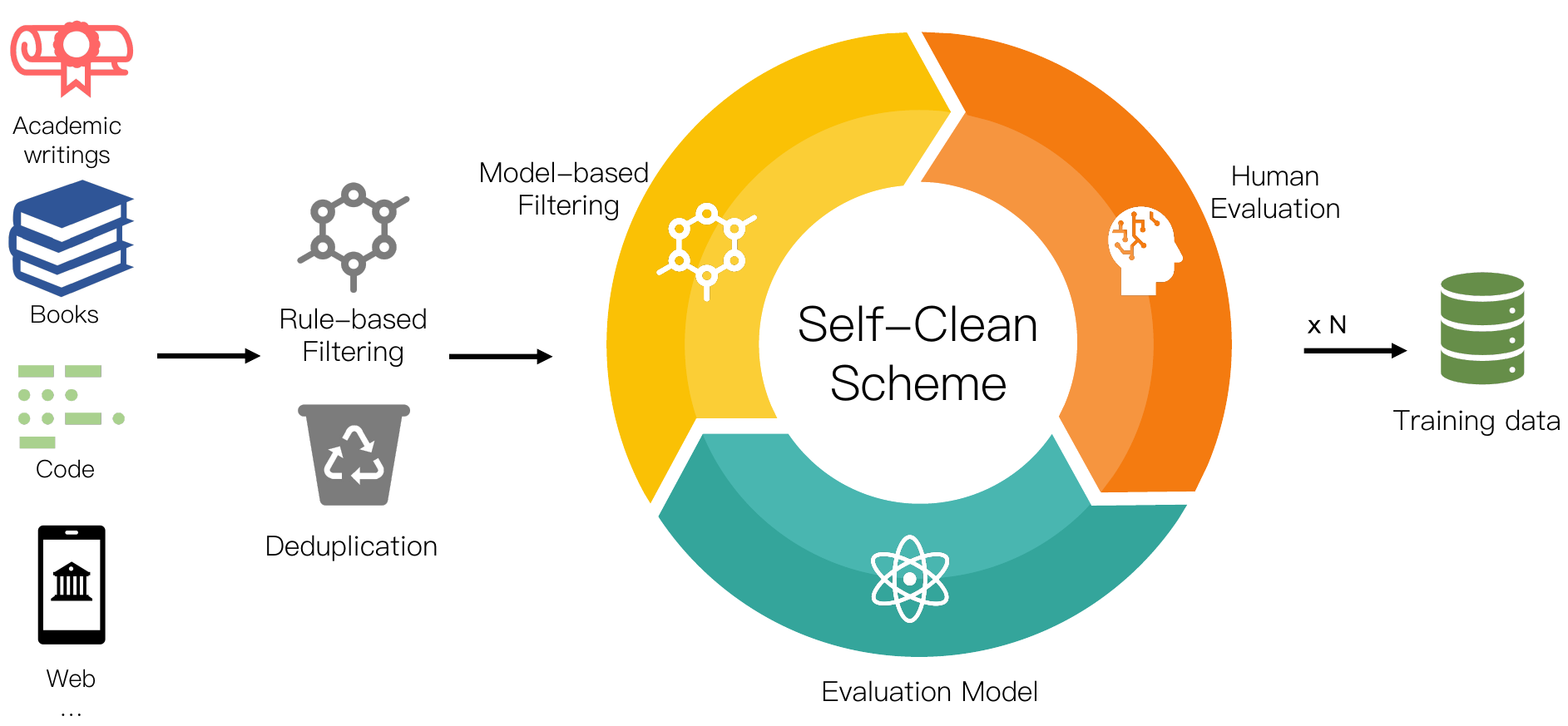}
        \vspace{-4mm}
        \caption{\textbf{Data Preprocess Procedure.} The collected data undergoes a process of rule-based filtering and deduplication, followed by our self-clean data processing strategy: model-based filtering, human evaluation, and evaluation model. After several iterations of the above cycle, we obtain high-quality training data at around 2T tokens.}
        \label{fig:data_preprocess}
\end{figure*}

Our data preprocessing procedure consists of three steps: 1). rule-based filtering, 2). deduplication, and 3). a self-cleaning scheme. Before being added to the training corpus, the cleaned corpus needs to be evaluated by humans.

\paragraph{Rule-based filtering}
The rules we used to filter our collected data are listed as follows:
\begin{itemize}
    \item \emph{Removal of HTML Tags and URLs:} The initial step in our process is the elimination of HTML tags and web URLs from the text. This is achieved through regular expression techniques that identify these patterns and remove them, ensuring the language model focuses on meaningful textual content.
    \item \emph{Elimination of Useless or Abnormal Strings:} Subsequently, the cleaned dataset undergoes a second layer of refinement where strings that do not provide value, such as aberrant strings or garbled text, are identified and excised. This process relies on predefined rules that categorize certain string patterns as non-contributing elements.
    \item \emph{Deduplication of Punctuation Marks:} We address the problem of redundant punctuation marks in the data. Multiple consecutive punctuation marks can distort the natural flow and structure of sentences when training the model. We employ a rule-based system that trims these duplications down to a single instance of each punctuation mark.
    \item \emph{Handling Special Characters:} Unusual or special characters that are not commonly part of the language's text corpus are identified and either removed or replaced with a standardized representation.
    \item \emph{Number Standardization:} Numerical figures may be presented in various formats across different texts. These numbers are standardized into a common format to maintain consistency.
    \item \emph{Preservation of Markdown/LaTeX Formats:} While removing non-textual elements, exceptions are made for texts in Markdown and LaTeX formats. Given their structured nature and ubiquitous use in academia and documentation, preserving these formats can enhance the model's ability to understand and generate similarly formatted text.
\end{itemize}

\paragraph{Deduplication}
To ensure the uniqueness of our data and avert the risk of overfitting, we employ an efficient de-duplication strategy at the document or line level using MinHash and Locality-Sensitive Hashing (LSH) algorithms. This combination of MinHash and LSH ensures a balance between computational efficiency and accuracy in the deduplication process, providing a robust mechanism for data deduplication and text watermark removal. 

\paragraph{Self-cleaning scheme}
Our data self-cleaning process involves an iterative loop of the following three steps to continuously refine and enhance the quality of our dataset. An issue of using model-based data filters is that the filtered data will have a similar distribution as the evaluation model, which may have a significant impact on the diversity of the training data. Assuming that the majority of the pre-processed data is of high quality, we can train an evaluation model on the entire set of pre-processed data, and the model will automatically smooth the data manifold distribution and outlet low-quality data while retaining the majority of the diversities. 

The self-cleaning scheme unfolds as follows:
\begin{itemize}
    \item \emph{Evaluation Model:} We train a 385M model on the pre-processed corpus to act as a data quality filter. 
    \item \emph{Model-Based Data Filtering:} We use the evaluation model to assess each piece of data with perplexity. Only data achieving a score above a certain threshold is preserved for the next step. Low-quality data are weeded out at this stage.
    \item \emph{Human Evaluation:} We sample a small portion of the filtered data and manually evaluate the quality. 
\end{itemize}

These steps are repeated in cycles, with each iteration improving the overall quality of the data and ensuring the resulting model is trained on relevant, high-quality text. This self-cleaning process provides a robust mechanism for maintaining data integrity, thereby enhancing the performance of the resulting language model.

\begin{table}[t]
\small
    \caption{\textbf{Statistics of our corpus.} For each category, we list the number of epochs performed on the subset when training on the 2 trillion tokens, as well as the number of tokens and disk sizes. We also list the table on the right according to the language distribution.  }
    \label{tab:pretraing_data}
    \centering
    \begin{minipage}{0.49\textwidth}
        \centering
        \setlength{\tabcolsep}{2mm}
        \begin{tabular}{p{3.5cm}crr}
        \toprule[0.8pt]
        Dataset & Epochs & Tokens & Disk size \\ \hline
        Academic Writings & 1.53 & 200 B & 672 GB \\ 
        Books & 2.49 & 198 B & 723 GB \\
        Code & 0.44 & 689 B & 1.4 TB \\
        Encyclopedia & 1.51 & 5 B & 18 GB \\
        Filtered Webpages & 1.00 & 882 B & 3.1 TB \\
        Others & 0.63 & 52 B & 154 GB \\ \hline
        Total & - & 2026 B  & 6 TB \\ \toprule[0.8pt]
        \end{tabular}
    \end{minipage}%
    \hfill
    \begin{minipage}{0.5\textwidth}
        \centering
        \setlength{\tabcolsep}{8mm}
        \begin{tabular}{cccr}
        \toprule[0.8pt]
        \multicolumn{2}{c}{Language} & Tokens & Disk size \\ \hline
        \multicolumn{2}{c}{English} & 743 B & 2.9 TB \\
        \multicolumn{2}{c}{Chinese} & 555 B & 1.7 TB \\
        \multicolumn{2}{c}{Code} & 689 B & 1.4 TB \\
        \multicolumn{2}{c}{Others} & 39 B & 89 GB \\ \hline
        \multicolumn{2}{c}{Total} & 2026 B & 6 TB \\ \toprule[0.8pt]
        \end{tabular}
    \end{minipage}%
\end{table}


\subsection{Tokenization}
We tokenize the data with the Byte-Pair Encoding (BPE) algorithm. Notably, to enhance compatibility with Chinese language content, a significant number of common and uncommon Chinese characters have been incorporated into our vocabulary. In cases where vocabulary items are not present in the dictionary, the words are broken down into their constituent UTF-8 characters. This strategy ensures comprehensive coverage and flexibility for diverse linguistic input during model training.

\section{Distributed System Optimization}
We optimize our system to execute large-scale pre-training for TNL effectively. We employ fully sharded data parallelism (FSDP)~\cite{zhao2023pytorch}, activation checkpointing~\cite{shoeybi2019megatron}, and automatic mixed precision (AMP)~\cite{micikevicius2017mixed} techniques to reduce memory footprint and expedite computational speed. We used BFloat16~\cite{kalamkar2019study} to enhance training stability. We implemented model parallelism tailored to Lightning Attention. Inspired by Megatron-LM~\cite{shoeybi2019megatron} model parallelism, which independently addresses self-attention and MLP blocks, we apply model parallelism to SGLU and GLA separately. The details of our model parallelism strategies are elaborated below.

\textbf{SGLU Model Parallelism}
Recall SGLU structure in (\ref{eq: glu}):
\begin{equation}
\small
\mathbf O=[(\mathbf X \mathbf W_v) \odot (\mathbf X \mathbf W_u)]\mathbf W_o,
\label{eq: mp_glu}
\end{equation}
The model parallelism adaptation of SGLU is as follows:
\begin{equation}
\small
\begin{aligned}
[\mathbf {O}'_1, \mathbf {O}'_2]&=\mathbf X[\mathbf W_v^1, \mathbf W_v^2] \odot \mathbf X[\mathbf W_u^1, \mathbf W_u^2]\\
&=[\mathbf X \mathbf W_v^1, \mathbf X \mathbf W_v^2] \odot [\mathbf X \mathbf W_u^1, \mathbf X \mathbf W_u^2],
\label{eq: mp_glu_1}
\end{aligned}
\end{equation}
\normalsize
which splits the weight matrices $\mathbf W_v$ and $\mathbf W_u$ along their columns and obtains an output matrix splitting along its columns too.
Then the split output $[\mathbf O_1, \mathbf O_2]$ is multiplied by another matrix which is split along its rows as:
\begin{equation}
\small
\mathbf {O}=[\mathbf O_1', \mathbf O_2'] [\mathbf W_o^1, \mathbf W_o^2]^\top=\mathbf O_1' \mathbf W_o^1 + \mathbf O_2' \mathbf W_o^2
\label{eq: mp_glu_output}
\end{equation}
Similar to model parallelism in Megatron-LM, this whole procedure splits three general matrix multiplies (GEMMs) inside the SGLU block across multiple GPUs and only introduces a single \textit{all-reduce} collective communication operation in both the forward and backward passes, respectively. 

\textbf{GLA Model Parallelism}
Recall the GLA block in (\ref{eq: gla}), its model parallelism version is:
\begin{equation}
\small
[\mathbf{O_1}, \mathbf{O_2}]=\mathrm{SRMSNorm}(\mathbf{Q} \mathbf{K}^{\top}\mathbf{V})\odot \mathbf{U},
\label{eq: mp_gla1}
\end{equation}
where:
\begin{equation}
\small
\begin{aligned}
\mathbf Q&=[\phi(\mathbf X \mathbf W_q^1), \phi(\mathbf X \mathbf W_q^2)],
\mathbf K=[\phi(\mathbf X \mathbf W_q^1), \phi(\mathbf X \mathbf W_q^2)],\\
\mathbf V&=\mathbf X [\mathbf W_v^1, \mathbf W_v^2], \mathbf U=\mathbf X [\mathbf W_u^1, \mathbf W_u^2],
\label{eq: mp_gla2}
\end{aligned}
\end{equation}
\normalsize
Note that in our implementation, we use the combined QKVU projection to improve computation efficiency for linear attention.
The obtained split output matrix $[\mathbf{O_1}, \mathbf{O_2}]$ again is multiplied by a weight matrix split along its columns which is similar to (\ref{eq: mp_glu_output}).

\begin{figure*}[t!]
    \centering
    \includegraphics[width=0.95\textwidth]{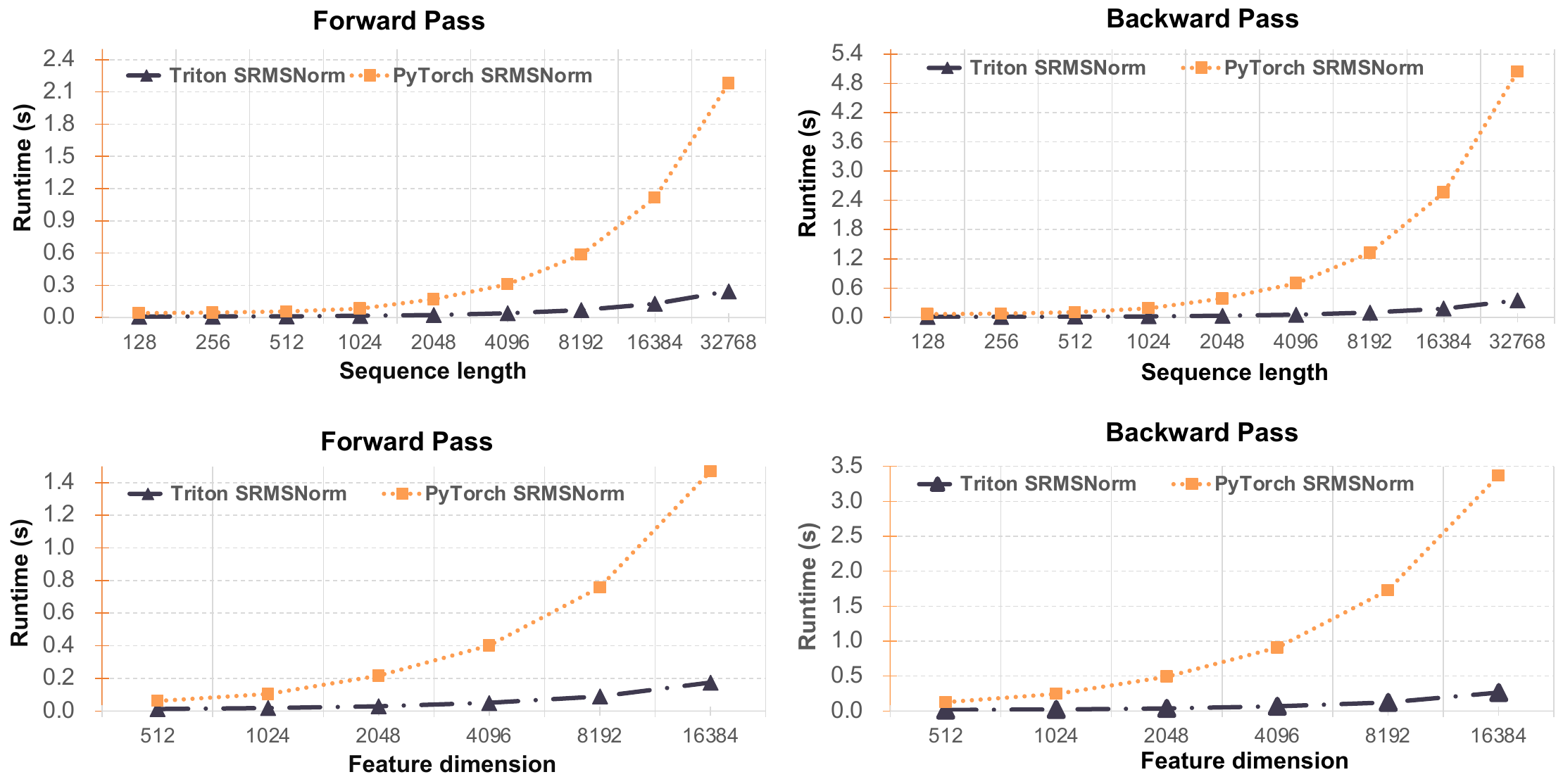}
    \vspace{-2mm}
    \caption{\textbf{Performance Evaluation of SRMSNorm Implementation.} The upper figures exhibit the runtime comparison of the forward pass (left section) and backward pass (right section) for different sequence lengths, with a fixed feature dimension of 3072. The lower two figures illustrate the runtime comparison for various feature dimensions, with a fixed sequence length of 4096.}
    \vspace{-4mm}
    \label{fig:norm}
\end{figure*}

\section{Additional TNL Ablation}

\paragraph{Transformer \emph{vs} TNL} 
We carried out a meticulous series of comparative tests between our TNL and Transformer, spanning over an array of disparate sizes. The comparative performance of these models is clearly illustrated in Table~\ref{tab:transf_vs_transn}. 
Under identical configurations, it becomes evident that our TNL exhibits a superior performance profile compared to Transformer. We observed that TNL outperformed Transformer by a remarkable 5\% at the size of 385M. More importantly, as the size reached 1B, this superiority became even more pronounced, with an advantage of 9\% for TNL over Transformer.

\begin{table}[h]
\centering
    \small
    \vspace{-4mm} 
    \caption{\small\textbf{Transformer \emph{vs} TNL.} TNL performs better than Transformer in size of 385M and 1B under identical configurations by 5\% and 9\%, respectively.}
    \vspace{1mm} 
    \label{tab:transf_vs_transn}
    \setlength{\tabcolsep}{4.0mm}
    \begin{tabular}{ccccc}
    \toprule
    \small
    \\[-1em]
    \\[-1em]
    \multicolumn{2}{c}{Method}      & \multicolumn{1}{c}{Updates} & Loss & PPL  \\ \hline
    \\[-1em]
    \multicolumn{2}{l}{Transformer-385M}    & 100K  & 2.362 &5.160   \\
    \multicolumn{2}{l}{TNL-385M} & 100K  & 2.248 &4.770 \\ \hline
    \multicolumn{2}{l}{Transformer-1B}      & 100K  & 2.061 &4.765 \\ 
    \multicolumn{2}{l}{TNL-1B}   & 100K  & 1.896 &3.729 \\

    \bottomrule
\end{tabular}
\vspace{-3mm}
\end{table}

\begin{table}[h]
    \centering
  \small
  \vspace{-5mm}
    \caption{\textbf{TransNormer \emph{vs} TNL.}  TNL performs better than TransNormer.}
 \vspace{1mm} 
    \label{tab:transnormers}
    \centering
    \setlength{\tabcolsep}{2.6mm}
        \begin{tabular}{lllll}
        \toprule
        \\[-1em]
        Method & Params & Updates & Loss & PPL \\ \hline
        \\[-1em]
        TNL & 385M & 100K  & 2.248 &4.770 \\ 
        TransNormer-T1 & 379M & 100K & 2.290 &4.910 \\ 
        TransNormer-T2 & 379M & 100K & 2.274 &4.858\\ 
        \bottomrule
    \end{tabular}
    \vspace{-3mm}
\end{table}

We compare the original TransNormer and the improved TNL and the results are shown in Table~\ref{tab:transnormers}. TNL exhibited an enhancement of 2\% and 1\% respectively.

\paragraph{Speed Normalization Fucntions} We enhanced SRMSNorm using Triton, resulting in notable improvements in processing speed for larger dimensions, as shown in Fig. ~\ref{fig:norm}, outperforming conventional PyTorch implementations.

\end{document}